\title{Adaptive Batch Size for Privately Finding Second-Order Stationary Points}
\author{Daogao Liu
\thanks{University of Washington, work was done while interning at Apple, \tt{dgliu@uw.edu}}
 \and
 Kunal Talwar
 \thanks{Apple, \tt{kunal@kunaltalwar.org}}
 }
\date{\today}
\newtheorem{theorem}{Theorem}[section]
\newtheorem{lemma}[theorem]{Lemma}
\newtheorem{proposition}[theorem]{Proposition}
\newtheorem{claim}[theorem]{Claim}
\newtheorem{assumption}[theorem]{Assumption}
\newtheorem{definition}[theorem]{Definition}
\definecolor{b2}{RGB}{51,153,255}
\definecolor{mygreen}{RGB}{80,180,0}
\theoremstyle{remark}
\newtheorem{remark}[theorem]{Remark}
\newcommand{\ov}{\overline}
\renewcommand{\epsilon}{\varepsilon}
\renewcommand{\phi}{\varphi}
\newcommand{\R}{\mathbb{R}}
\newcommand{\calK}{\mathcal{K}}
\renewcommand{\bar}{\ov}
\DeclareMathOperator*{\E}{\mathbb{E}}
\DeclareMathAlphabet{\mathpzc}{OT1}{pzc}{m}{it}
\newcommand{\xset}{\mathcal{X}}
\newcommand{\yset}{\mathcal{Y}}
\definecolor{burntorange}{rgb}{0.8, 0.33, 0.0}
\newcommand{\tnabla}{\widetilde{\nabla}}
\newcommand{\oracle}{\mathcal{O}}
\newcommand{\alg}{\mathcal{A}}
\newcommand{\FP}{F_{\calP}}
\newcommand{\FD}{F_{\calD}}
\newcommand{\drift}{\mathrm{drift}}
\newcommand{\frozen}{\mathrm{frozen}}
\newcommand{\calD}{\mathcal{D}}
\newcommand{\calF}{\mathcal{F}}
\newcommand{\calN}{\mathcal{N}}
\newcommand{\calZ}{\mathcal{Z}}
\newcommand{\calX}{\mathcal{X}}
\newcommand{\calM}{\mathcal{M}}
\newcommand{\calP}{\mathcal{P}}
\newcommand{\calT}{\mathcal{T}}
\newcommand{\nSG}{\mathrm{nSG}}
\newcommand{\SG}{\mathrm{SG}}
\newcommand{\TREE}{\mathrm{TREE}}
\newcommand{\tree}{\mathrm{tree}}
\newcommand{\NODE}{\mathrm{NODE}}
\newcommand{\TO}{\Tilde{O}}
\newcommand{\TOmega}{\Tilde{\Omega}}
\newcommand{\bfe}{\mathbf{e}}
\newcommand{\rmcount}{\mathrm{count}}
\newcommand{\EscapeFlag}{\mathrm{EscapeFlag}}
\newcommand{\False}{\mathbf{False}}
\newcommand{\True}{\mathbf{True}}
\newcommand{\xanc}{x_{\mathrm{anchor}}}
\begin{document}

\maketitle

\begin{abstract}
There is a gap between finding a first-order stationary point (FOSP) and a second-order stationary point (SOSP) under differential privacy constraints, and it remains unclear whether privately finding an SOSP is more challenging than finding an FOSP. Specifically, Ganesh et al. (2023) claimed that an $\alpha$-SOSP can be found with $\alpha=\Tilde{O}(\frac{1}{n^{1/3}}+(\frac{\sqrt{d}}{n\epsilon})^{3/7})$, where $n$ is the dataset size, $d$ is the dimension, and $\epsilon$ is the differential privacy parameter.
However, a recent analysis revealed an issue in their saddle point escape procedure, leading to weaker guarantees.  
Building on the SpiderBoost algorithm framework, we propose a new approach that uses adaptive batch sizes and incorporates the binary tree mechanism.
Our method not only corrects this issue but also improves the results for privately finding an SOSP, achieving $\alpha=\Tilde{O}(\frac{1}{n^{1/3}}+(\frac{\sqrt{d}}{n\epsilon})^{1/2})$. 
 This improved bound matches the state-of-the-art for finding a FOSP, suggesting that privately finding an SOSP may be achievable at no additional cost.
\end{abstract}

\section{Introduction}

Privacy concerns have gained increasing attention with the rapid development of artificial intelligence and modern machine learning, particularly the widespread success of large language models. Differential privacy (DP) has become the standard notion of privacy in machine learning since it was introduced by \citet{DMNS06}. Given two neighboring datasets, $\calD$ and $\calD'$, differing by a single item, a mechanism $\calM$ is said to be $(\epsilon,\delta)$-differentially private if, for any event $\xset$, it holds that: 
\begin{align*} 
\Pr[\calM(\calD)\in \xset]\le e^\epsilon\Pr[\calM(\calD')\in \xset]+\delta. \end{align*}

In this work, we focus on the stochastic optimization problem under the constraint of DP.
The loss function is defined below: \begin{align*} 
\FP(x):=\E_{z\sim\calP}f(x;z), \end{align*} 
where the functions may be non-convex, the underlying distribution $\calP$ is unknown, and we are given a dataset $\calD=\{z_i\}_{i\in[n]}$ drawn i.i.d. from $\calP$. 
Notably, our goal is to design a private algorithm with provable utility guarantees under the i.i.d. assumption.

Minimizing non-convex functions is generally challenging and often intractable, but most models used in practice are not guaranteed to be convex. How, then, can we explain the success of optimization methods in practice? 
One possible explanation is the effectiveness of Stochastic Gradient Descent (SGD), which is well-known to be able to find an $\alpha$-first-order stationary point (FOSP) of a non-convex function $f$—that is, a point $x$ such that $\|\nabla f(x)\|\le \alpha$—within $O(1/\alpha^2)$ steps \citep{nesterov1998introductory}. 
However, FOSPs can include saddle points or even local maxima. Thus, we focus on finding  second-order stationary points (SOSP), for the non-convex function $\FP$.

Non-convex optimization has been extensively studied in recent years due to its central role in modern machine learning, and we now have a solid understanding of the complexity involved in finding FOSPs and SOSPs \citep{ghadimi2013stochastic,agarwal2017finding,carmon2020lower,zhang2020complexity}. Variance reduction techniques have been shown to improve the theoretical complexity, leading to the development of several promising algorithms such as Spider \citep{fang2018spider}, SARAH \citep{nguyen2017sarah}, and SpiderBoost \citep{wang2019spiderboost}. 
More recently, private non-convex optimization has emerged as an active area of research \citep{WCX19,tran2022momentum,ABG+22,GW23,ganesh2023private,wang2023efficient,lowy2024make,kornowski2024improved,menart2024differentially}.

\subsection{Our Main Result}
In this work, we study how to find the SOSP of $\FP$ privately.
Let us formally define the FOSP and the SOSP.
For more on Hessian Lipschitz continuity and related background, see the preliminaries in Section~\ref{sec:prel}.
\begin{definition}[FOSP]
For $\alpha\ge 0$, we say a point $x$ is an $\alpha$-first-order stationary point ($\alpha$-FOSP) of a function $g$ if $\|\nabla g(x)\|\le \alpha$.
\end{definition}

\begin{definition}[SOSP,~\cite{nesterov2006cubic,agarwal2017finding}]
\label{def:SOSP}
    For a function $g:\R^d\to\R$ which is $\rho$-Hessian Lipschitz, we say a point $x\in\R^d$ is $\alpha$-second-order stationary point ($\alpha$-SOSP) of $g$ if $ \|\nabla g(x)\|_2\le\alpha \;\bigwedge\; \nabla^2 g(x) \succeq  -\sqrt{\rho \alpha}I_d$.
\end{definition}

Given the dataset size of $n$, privacy parameters $\epsilon,\delta$, and functions defined over $d$-dimensional space, \cite{ganesh2023private} proposed a private algorithm that can find an $\alpha_S$-SOSP for $\FP$ with 
\begin{align*}
    \alpha_S=\Tilde{O}\big(\frac{1}{n^{1/3}}+(\frac{\sqrt{d}}{n\epsilon})^{3/7}\big).
\end{align*}


However, as shown in \cite{ABG+22}, the state-of-the-art bound for privately finding an $\alpha_F$-FOSP is tighter: \footnote{As proposed by \citet{lowy2024make}, allowing exponential running time enables the use of the exponential mechanism to find a warm start, which can further improve the bounds for both FOSP and SOSP.}
\begin{align*} 
\alpha_F = \Tilde{O}\big(\frac{1}{n^{1/3}} + (\frac{\sqrt{d}}{n\epsilon})^{1/2}\big)
\end{align*}
When the privacy parameter $\epsilon$ is sufficiently small, and the error term depending on it dominates the non-private term $1/n^{1/3}$, we observe that $\alpha_F \ll \alpha_S$. This raises the question: is finding an SOSP under differential privacy constraints more difficult than finding an FOSP?

Moreover, as pointed out by \cite{tao2025private}, the results in \cite{ganesh2023private} may be overly optimistic.  
In particular, the saddle point escape subprocedure used in \cite{ganesh2023private} was designed for the full-batch setting. However, to mitigate dependence issues, the algorithm performs only a single pass over the \( n \) functions and relies on minibatches instead.  
A direct fix for this issue leads to a weaker guarantee on the minimum eigenvalue of the Hessian, specifically,  
\[
\nabla^2F_\calP(x) \succeq -\sqrt{\rho} d^{1/5} \alpha^{2/5} I_d,
\]
which fails to satisfy Definition~\ref{def:SOSP}.  
To address this, \cite{tao2025private} proposed an alternative correction, but their method resulted in weaker theoretical guarantees than \cite{ganesh2023private} originally claimed, yielding 
\[
\alpha = \Tilde{O} \big( \frac{1}{n^{1/3}} + (\frac{\sqrt{d}}{n\epsilon})^{2/5} \big).
\]

This work improves upon the results of \cite{ganesh2023private}.  
In addition, we introduce a new saddle point escape subprocedure that correctly addresses the issue in \cite{ganesh2023private} while fully recovering—and even strengthening—the theoretical guarantees.

Specifically, we present an algorithm that finds an $\alpha$-SOSP with privacy guarantees, where: 
\begin{align*}
    \alpha=\Tilde{O}(\alpha_F)=\Tilde{O}\big(\frac{1}{n^{1/3}}+(\frac{\sqrt{d}}{n\epsilon})^{1/2}\big).
\end{align*}
This improved bound suggests that when we try to find the stationary point privately, we can find the SOSP for free under additional (standard) assumptions.

It is also worth noting that, our improvement primarily affects terms dependent on the privacy parameters. 
In the non-private setting, as $\epsilon \to \infty$ (i.e., without privacy constraints), all the results discussed above achieve a bound of $\Tilde{O}(1/n^{1/3})$,
which matches the non-private lower bound established by \cite{arjevani2023lower} in high-dimensional settings (where $d \geq \Tilde{\Omega}(1/\alpha^4)$). 
However, to our knowledge, whether this non-private term of $\TO(1/n^{1/3})$ can be further improved in low dimension remains an open question.

\subsection{Overview of Techniques}
In this work, we build on the SpiderBoost algorithm framework, similar to prior approaches \citep{ABG+22,ganesh2023private}, to find second-order stationary points (SOSP) privately. 
At a high level, our method leverages two types of gradient oracles: $\oracle_1(x) \approx \nabla f(x)$, which estimates the gradient at point $x$, and $\oracle_2(x, y) \approx \nabla f(x) - \nabla f(y)$, which estimates the gradient difference between two points, $x$ and $y$. When performing gradient descent, to compute the gradient estimator $\nabla_t$ at point $x_t$, we can either use $\nabla_t = \oracle_1(x_t)$ for a direct estimate, or $\nabla_t = \nabla_{t-1} + \oracle_2(x_t, x_{t-1})$ to update based on the previous gradient. 
In our setting, $\oracle_1$ is more accurate but incurs higher computational or privacy costs.

The approach in \cite{ABG+22} adopts SpiderBoost by querying $\oracle_1$ periodically: they call $\oracle_1$ once and then use $\oracle_2$ for $q$ subsequent queries, controlled by a hyperparameter $q$. Their method ensures the gradient estimators are sufficiently accurate on average, which works well for finding first-order stationary points (FOSP). However, finding an SOSP, where greater precision is required, presents additional challenges when relying on average-accurate gradient estimators.

To address this, \cite{ganesh2023private} introduced a variable called $\drift_t := \sum_{i=t_0+1}^{t} \|x_i - x_{i-1}\|^2$, where $t_0$ is the index of the last iteration when $\oracle_1$ was queried. If $\drift_t$ remains small, the gradient estimator stays accurate enough, allowing further queries of $\oracle_2$. However, if $\drift_t$ grows large, the gradient estimator’s accuracy deteriorates, signaling the need to query $\oracle_1$ for a fresh, more accurate estimate. This modification enables the algorithm to maintain the precision necessary for privately finding an SOSP.

Our improvement introduces two new components: the use of the tree mechanism instead of using the Gaussian mechanism as in \cite{ganesh2023private}, and the implementation of adaptive batch sizes for constructing $\oracle_2$.

In the prior approach using the Gaussian mechanism, a noisy gradient estimator $\nabla_{t-1}$ is computed, and the next estimator is updated via $\nabla_t = \nabla_{t-1} + \oracle_2(x_t, x_{t-1}) + g_t$, where $g_t$ is Gaussian noise added to preserve privacy. Over multiple iterations, the accumulation of noise $\sum g_t$ can severely degrade the accuracy of the gradient estimator, requiring frequent re-queries of $\oracle_1$. On the other hand, the tree mechanism mitigates this issue when frequent queries to $\oracle_2$ are needed.

However, simply replacing the Gaussian mechanism with the tree mechanism and using a fixed batch size does not yield optimal results. 
In worst-case scenarios, where the function’s gradients are large, the $\drift$ grows quickly, necessitating frequent calls to $\oracle_1$, which diminishes the advantages of the tree mechanism.

To address this, we introduce adaptive batch sizes. In \cite{ganesh2023private}, the oracle $\oracle_2$ is constructed by drawing a fixed batch of size $B$ from the unused dataset and outputting $\oracle_2(x_t, x_{t-1}) := \sum_{z \in S_t} \frac{\nabla f(x_t; z) - \nabla f(x_{t-1}; z)}{B}$. Given an upper bound on $\drift$, they guaranteed that $\|x_t - x_{t-1}\| \leq D$ for some parameter $D$, thereby bounding the sensitivity of $\oracle_2$.

In contrast, we dynamically adjust the batch size in proportion to $\|x_t - x_{t-1}\|$, setting $B_t \propto \|x_t - x_{t-1}\|$, and compute $\oracle_2(x_t, x_{t-1}) := \sum_{z \in S_t} \frac{\nabla f(x_t; z) - \nabla f(x_{t-1}; z)}{B_t}$. Fixed batch sizes present two drawbacks: (i) when $\|x_t - x_{t-1}\|$ is large, the gradient estimator has higher sensitivity and variance, leading to worse estimate accuracy; (ii) when $\|x_t - x_{t-1}\|$ is small, progress in terms of function value decrease is limited. Using a fixed batch size forces us to handle both cases simultaneously: we must add noise and analyze accuracy assuming a worst-case large $\|x_t - x_{t-1}\|$, but for utility analysis, we pretend $\|x_t - x_{t-1}\|$ is small to examine the function value decrease. The adaptive batch size resolves this paradox: it allows us to control sensitivity and variance adaptively. When $\|x_t - x_{t-1}\|$ is small, we decrease the batch size but can still control the variance and sensitivity; when it is small, the function value decreases significantly, aiding in finding an SOSP.

By combining the tree mechanism with adaptive batch sizes, we improve the accuracy of gradient estimation and achieve better results for privately finding an SOSP.

\paragraph{Fixing the Error:}
Building upon our discussion of obtaining more accurate gradient estimations through adaptive batch sizes, we extend this approach to Hessian estimations, achieving accuracy in terms of the operator norm. 
Upon encountering a potential saddle point—characterized by a small gradient norm—we utilize the Hessian to inform the gradient estimation over several iterations. 
This process is analogous to performing stochastic power iteration methods on the Hessian to identify the direction corresponding to the smallest eigenvalue. 
If the Hessian exhibits a small enough eigenvalue (say smaller than $-\sqrt{\rho\alpha}$), we demonstrate that this approach facilitates a significant drift, effectively enabling successful escape from the saddle point.
This idea can also be applied to \cite{ganesh2023private} to recover their claimed rate.

\subsection{Other Related Work}
A significant body of literature on private optimization focuses on the convex setting, where it is typically assumed that each function $f(;z)$ is convex for any $z$ in the universe (e.g., \citep{chaudhuri2011differentially,BST14,bassily2019private,feldman2020private,asi2021private,kulkarni2021private,carmon2023resqueing,gopi2023private}).

The tree mechanism, originally introduced by the differential privacy (DP) community \citep{dwork2010differential,chan2011private} for the continual observation,
has inspired tree-structure private optimization algorithms like \citet{asi2021private,bassily2021non,ABG+22,zhang2023private}.
One can also use the matrix mechanism \cite{fichtenberger2023constant} which improves the tree mechanism by a constant factor.
Some prior works have explored adaptive batch size techniques in optimization. For instance, \cite{de2016big} introduced adaptive batch sizing for stochastic gradient descent (SGD), while \cite{ji2020history} combined adaptive batch sizing with variance reduction techniques to modify SVRG and Spider algorithms. However, these works' motivations and approaches to setting adaptive batch sizes differ from ours. To the best of our knowledge, we are the first to propose using adaptive batch sizes in the context of optimization under differential privacy constraints.

Most of the non-convex optimization literature assumes that the functions being optimized are smooth. Recent work has begun addressing non-smooth, non-convex functions as well, as seen in \cite{zhang2020complexity,kornowski2021oracle,davis2022gradient,jordan2023deterministic}.

\section{Preliminaries}
\label{sec:prel}

Throughout the paper, we use $\|\cdot\|$ to represent both the $\ell_2$ norm of a vector and the operator norm of a matrix when there is no confusion.
\begin{definition}[Lipschitz, Smoothness and Hessian Lipschitz]
Let $\calK \subseteq \R^d$. Given a twice differentiable function $f:\calK\to\R$, we say $f$ is $G$-Lipschitz, if for all $x_1,x_2\in\calK$,
$    |f(x_1)-f(x_2)|\le G\|x_1-x_2\|$;
we say $f$ is $M$-smooth, if for all $x_1,x_2\in\calK$,
$\|\nabla f(x_1)-\nabla f(x_2)\|\le M\|x_1-x_2\|$,
and we say the function $f$ is $\rho$-Hessian Lipschitz, if for all $x_1,x_2\in\calK$, we have $\|\nabla^2f(x_1)-\nabla^2f(x_2)\|\le\rho\|x_1-x_2\|.$
\end{definition}


\subsection{Other Techniques}
As mentioned in the introduction, we use the tree mechanism (Algorithm~\ref{alg: tree}) to privatize the algorithm, whose formal guarantee is stated below:

\begin{theorem}[Tree Mechanism, \cite{dwork2010differential,chan2011private}]
\label{thm:tree_mech}
Let $\calZ_1,\cdots,\calZ_\Sigma$ be dataset spaces, and $\calX$ be the state space.
Let $\calM_i:\calX^{i-1}\times \calZ_i\to \calX$ be a sequence of algorithms for $i\in[\Sigma]$.
Let $\alg:\calZ^{(1:\Sigma)}\to \calX^\Sigma$ be the algorithm that given a dataset $Z_{1:\Sigma}\in \calZ^{(1:\Sigma)}$, sequentially computes $X_i=\sum_{j=1}^i\calM_j(X_{1:j-1},Z_j)+\TREE(i)$ for $i\in[\Sigma]$, and then outputs $X_{1:\Sigma}$.

Suppose for all $i\in[\Sigma]$, and neighboring $Z_{1:\Sigma},Z_{1:\Sigma}'\in \calZ^{(1:\Sigma)},\|\calM_i(X_{1:i-1},Z_i)-\calM_i(X_{1:i-1},Z_i')\|\le s$ for all auxiliary inputs $X_{1:i-1}\in\calX^{i-1}$. Then setting $\sigma=\frac{4s\sqrt{\log\Sigma\log(1/\delta)}}{\epsilon}$, Algorithm~\ref{alg: tree} is $(\epsilon,\delta)$-DP.
Furthermore, with probability at least $1-\Sigma\cdot\iota$,
for all $t\in[\Sigma]: \|\TREE(t)\|\lesssim \sqrt{d\log(1/\iota)}\sigma$.
\end{theorem}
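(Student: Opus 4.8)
The plan is to reduce the adaptive tree mechanism to an (adaptive) composition of Gaussian mechanisms and then bound privacy and utility separately.

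\textbf{Reduction.} Unwinding $\TREE$, the mechanism maintains a balanced binary tree whose $O(\Sigma)$ dyadic nodes $v$ carry intervals $[l_v,r_v]\subseteq[\Sigma]$ and independent noises $\xi_v\sim\Nor(0,\sigma^2 I_d)$; writing $c_j:=\calM_j(X_{1:j-1},Z_j)$ and $\hat s_v:=\sum_{j\in[l_v,r_v]}c_j$, the noisy subtree sum is $\tilde s_v=\hat s_v+\xi_v$, and for every $t$ one has $X_t=\sum_{v\in\mathrm{dec}(t)}\tilde s_v$ and $\TREE(t)=\sum_{v\in\mathrm{dec}(t)}\xi_v$, where $\mathrm{dec}(t)$ is the canonical dyadic decomposition of $[1,t]$ into at most $\lceil\log_2\Sigma\rceil+1$ nodes. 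The structural point is that if we process nodes in increasing order of right endpoint $r_v$ (ties arbitrary), then when node $v$ is reached every $X_{1:j-1}$ with $j\le r_v$ — hence every $c_j$ with $j\le r_v$, hence $\hat s_v$ — is a deterministic function of $Z$ and of the noises at already-processed nodes only, since $X_{1:j-1}$ depends solely on $\tilde s_w$ with $r_w\le j-1<r_v$. Thus $Z\mapsto(\tilde s_v)_v$ is a legitimate adaptive composition of Gaussian mechanisms (the query $\hat s_v$ at step $v$ may depend on earlier outputs), and $X_{1:\Sigma}$ is a fixed linear function of $(\tilde s_v)_v$, so it suffices to prove $(\epsilon,\delta)$-DP for the release of $(\tilde s_v)_v$.

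\textbf{Privacy.} Fix neighbors $Z,Z'$ differing only at index $i^*$ and, per adaptive composition, compare the two executions step by step against a common realization of the past outputs. For a node $v$ with $i^*\notin[l_v,r_v]$ every $c_j$ with $j\in[l_v,r_v]$ uses $Z_j=Z_j'$ and the shared past, so $\hat s_v=\hat s_v'$ and this step leaks nothing. For $v$ with $i^*\in[l_v,r_v]$ — exactly the $k\le\lceil\log_2\Sigma\rceil+1$ nodes on the root-to-leaf path to leaf $i^*$ — all contributions again agree except $c_{i^*}$, and $\|c_{i^*}-c_{i^*}'\|\le s$ by the hypothesis, which holds for \emph{all} auxiliary inputs $X_{1:i^*-1}$; hence each such $\hat s_v$ moves by the single vector $c_{i^*}-c_{i^*}'$. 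Stacking, the vector of true node values changes in $\ell_2$ by at most $\sqrt k\,\|c_{i^*}-c_{i^*}'\|\le s\sqrt k$, so the whole release is an adaptive composition of Gaussian mechanisms behaving (e.g.\ via $\tfrac{s^2}{2\sigma^2}$-zCDP per active node, additivity of zCDP, and the zCDP-to-$(\epsilon,\delta)$ conversion) like a single Gaussian mechanism of $\ell_2$-sensitivity $s\sqrt k$ and noise $\sigma$. This is $(\epsilon,\delta)$-DP once $\sigma\gtrsim s\sqrt{k\log(1/\delta)}/\epsilon$, and with $k=O(\log\Sigma)$ the stated choice $\sigma=\frac{4s\sqrt{\log\Sigma\log(1/\delta)}}{\epsilon}$ suffices in the usual regime.

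\textbf{Utility.} For fixed $t$, $\TREE(t)=\sum_{v\in\mathrm{dec}(t)}\xi_v$ is a sum of at most $\ell:=\lceil\log_2\Sigma\rceil+1$ independent $\Nor(0,\sigma^2 I_d)$ vectors, hence distributed as $\Nor(0,\ell_t\sigma^2 I_d)$ with $\ell_t\le\ell$; by the standard tail bound for the norm of a standard Gaussian (Laurent--Massart on the associated $\chi^2$), $\|\Nor(0,I_d)\|\lesssim\sqrt{d\log(1/\iota)}$ except with probability $\iota$, so $\|\TREE(t)\|\lesssim\sqrt{\ell}\,\sigma\sqrt{d\log(1/\iota)}\lesssim\sigma\sqrt{d\log(1/\iota)}$, the $\sqrt{\log\Sigma}$ factor being absorbed by $\lesssim$. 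A union bound over the $\Sigma$ indices $t$ gives the simultaneous bound with failure probability $\Sigma\cdot\iota$. The only genuinely delicate part is the adaptivity: because $\calM_j$ reads the earlier released $X_{1:j-1}$, changing $Z_{i^*}$ could a priori perturb every later contribution and hence every later subtree sum; the processing-order argument is what rules this out — conditioned on the shared past the downstream $c_j$ are unchanged, so only the $O(\log\Sigma)$ nodes on the path to leaf $i^*$ are affected — and it is exactly here that the ``for all auxiliary inputs'' strengthening of the sensitivity hypothesis is used. Everything after that is routine Gaussian-mechanism accounting and Gaussian concentration.
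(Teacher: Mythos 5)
The paper cites Theorem~\ref{thm:tree_mech} from \cite{dwork2010differential,chan2011private} and does not supply a proof of its own, so there is no internal argument to compare against; your reconstruction is correct and is essentially the canonical proof of the binary-tree mechanism. The decomposition into dyadic nodes, the observation that a single-record change touches only the $O(\log\Sigma)$ nodes on the root-to-leaf path, the adaptive composition of per-node Gaussian releases, and the $\chi^2$-concentration plus union bound for utility are the standard steps. You are also careful about the one genuinely subtle point, the adaptivity of the $\calM_j$: your device of processing nodes in increasing order of right endpoint and conditioning on the common released prefix is a correct way to see that only $c_{i^*}$ differs between the two executions, so each affected node's true sum really moves by at most $s$ (this is where the ``for all auxiliary inputs'' hypothesis is used) rather than by a cascading perturbation. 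Your flag on the utility constant is also warranted: the honest bound your argument yields is $\|\TREE(t)\|\lesssim\sqrt{d\,\log\Sigma\,\log(1/\iota)}\,\sigma$, and the theorem as stated is silently absorbing the $\sqrt{\log\Sigma}$ into $\lesssim$; this reading is consistent with how the paper actually invokes the bound downstream, e.g.\ in the proof of Lemma~\ref{lm:gradient_universal_error} it writes $\max_t\|\TREE(t)\|\le\sigma\sqrt{d\log T}$, keeping the extra logarithmic factor.
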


\begin{algorithm}[ht]
\begin{algorithmic}[1]
\caption{Tree Mechanism}\label{alg: tree}
\State {\bf Input:} Noise parameter $\sigma_\tree$, sequence length $\Sigma$
\State Define $\calT:=\{(u,v):u=j\cdot 2^{\ell-1}+1, v=(j+1)\cdot 2^{\ell-1}, 1\le\ell\le \log \Sigma,0\le j\le \Sigma/2^{\ell-1}-1 \}$ 
\State Sample and store $\zeta_{(u,v)}\sim \calN(0,\sigma^2)$ for each $(u,v)\in\calT$
\For{$t=1,\cdots,\Sigma$}
\State Let $\TREE(t)\leftarrow \sum_{(u,v)\in \NODE(t)}\zeta_{(u,v)}$
\EndFor
\State {\bf Return:} $\TREE(t)$ for each $t\in[\Sigma]$
\Statex
\State {\bf Function NODE:}
\State {\bf Input:} index $t\in[\Sigma]$
\State Initialize $S=\{\}$ and $k=0$
\For{$i=1,\cdots, \lceil\log \Sigma\rceil $ while $k<t$}
\State Set $k'=k+2^{\lceil \log \Sigma\rceil-i}$
\If{$k'\le t$}
\State $S\leftarrow S\cup\{(k+1,k')\}$, $k\leftarrow k'$
\EndIf
\EndFor

\end{algorithmic}
\end{algorithm}

We also need the concentration inequality for norm-subGaussian random vectors.

\begin{definition}[SubGaussian, and Norm-SubGaussian]\label{def:subgaussian}
We say a random vector $x\in\R^d$ is SubGaussian ($\SG(\zeta)$) if there exists a positive constant $\zeta$ such that $
    \E e^{\langle v,x-\E x\rangle}\le e^{\|v\|^2\zeta^2/2},\;\;\forall v\in\R^d$.
We say
$x\in\R^d$ is norm-SubGaussian ($\nSG(\zeta)$) if there exists $\zeta$ such that $
    \Pr[\|x-\E x\|\ge t]\le 2e^{-\frac{t^2}{2\zeta^2}},\forall t\in\R$.
\end{definition}

\begin{lemma}[Hoeffding type inequality for norm-subGaussian, \cite{JNG+19}]
\label{lem:concentration_nSG}
Let $x_1,\cdots,x_k\in\R^d$ be random vectors, and for each $i\in[k]$, $x_i\mid\calF_{i-1}$ is zero-mean $\nSG(\zeta_i)$ where $\calF_i$ is the corresponding filtration.
Then there exists an absolute constant $c$ such that for any $\delta>0$, with probability at least $1-\omega$,
$
    \|\sum_{i=1}^{k}x_i\|\le c\cdot\sqrt{\sum_{i=1}^{k}\zeta_i^2\log(2d/\omega)}$,
which means $\sum_{i=1}^{k}x_i$ is $\nSG(\sqrt{c\log (d)\,\sum_{i=1}^{k}\zeta_i^2})$.
\end{lemma}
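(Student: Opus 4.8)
The natural strategy is \emph{truncation followed by a dimension‑free martingale concentration inequality}. The obvious route does not suffice: from $|\langle v,x_i\rangle|\le\|x_i\|$ one gets that $\langle v,\sum_i x_i\rangle$ is $\SG(O(\sqrt{\sum_i\zeta_i^2}))$ for every unit vector $v$, but converting this into a bound on $\|\sum_i x_i\|=\sup_{v\in S^{d-1}}\langle v,\sum_i x_i\rangle$ by a net (or chaining) over $S^{d-1}$ costs a factor $\Theta(\sqrt d)$, since the log‑covering number of the sphere is $\Theta(d)$. To obtain a $\mathrm{polylog}(d)$ dimension dependence one must instead exploit that each \emph{individual} $\|x_i\|$ is subGaussian.

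\textbf{Construction and main steps.} Fix $\tau_i:=\zeta_i\sqrt{2\log(4k/\omega)}$ and set $\tilde x_i:=x_i\,\Ind[\|x_i\|\le\tau_i]$. Since $x_i\mid\calF_{i-1}$ is zero‑mean $\nSG(\zeta_i)$, a union bound over $i\in[k]$ shows that $\sum_i x_i=\sum_i\tilde x_i$ on an event of probability at least $1-\omega/3$. Truncation destroys conditional centering only slightly: $\|\E[\tilde x_i-x_i\mid\calF_{i-1}]\|\le\E[\|x_i\|\,\Ind[\|x_i\|>\tau_i]\mid\calF_{i-1}]$, which by integrating the $\nSG(\zeta_i)$ tail is an exponentially small multiple of $\zeta_i$; hence $\|\sum_i\E[\tilde x_i-x_i\mid\calF_{i-1}]\|\le\sum_i\|\E[\tilde x_i-x_i\mid\calF_{i-1}]\|\ll\sqrt{\sum_i\zeta_i^2}$, which is negligible against the target. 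Now $y_i:=\tilde x_i-\E[\tilde x_i\mid\calF_{i-1}]$ is a martingale‑difference sequence in $\R^d$ with $\|y_i\|\le 2\tau_i$ almost surely, so the dimension‑free Azuma--Hoeffding (Pinelis) inequality for Hilbert‑space‑valued martingales — whose bound depends only on $\sum_i\|y_i\|^2$, not on $d$ — yields, with probability at least $1-\omega/3$,
\[
\Bigl\|\sum_i y_i\Bigr\|\;\lesssim\;\sqrt{\Bigl(\sum_i\tau_i^2\Bigr)\log(1/\omega)}\;\lesssim\;\sqrt{\Bigl(\sum_i\zeta_i^2\Bigr)\log(k/\omega)\,\log(1/\omega)}.
\]
A more careful \emph{stratified} truncation — peel the $x_i$ across dyadic radii $\tau_i2^{-\ell}$, apply the bounded case at level $\ell$ with failure probability $\propto 2^{-\ell}\omega$, and sum the geometric series — collapses the extra logarithm, yielding a bound of the claimed order $\|\sum_i x_i\|\lesssim\sqrt{(\sum_i\zeta_i^2)\log(2d/\omega)}$ on the intersection of the good events; reading off the parameter recovers the restatement ``$\sum_i x_i$ is $\nSG(\sqrt{c\log(d)\sum_i\zeta_i^2})$''. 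When the $\zeta_i$ are $\calF_{i-1}$‑measurable rather than deterministic, an outer union bound over a geometric grid of candidate values of $\sum_i\zeta_i^2$ handles it at the cost of one further logarithmic factor absorbed into the constant.

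\textbf{Expected main obstacle.} The delicate point is the concentration step: one needs a martingale tail bound that is genuinely \emph{dimension‑free} (so that the only $d$ in the final bound enters, mildly, through the logarithm), together with enough care in the choice of truncation level and in the peeling to arrive at a single logarithmic factor rather than a product of logarithms. The bias control and the reduction to deterministic $\zeta_i$ are comparatively routine.
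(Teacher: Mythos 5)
The paper does not prove this lemma; it is quoted verbatim from \citet{JNG+19}, whose argument dilates each $x_i\in\R^d$ to a $(d{+}1)\times(d{+}1)$ symmetric matrix $\bigl(\begin{smallmatrix}0&x_i^\top\\x_i&0\end{smallmatrix}\bigr)$ (so that the operator norm equals $\|x_i\|$ and the dilation is additive) and then runs a matrix moment-generating-function / Freedman-style argument; the $\log(2d/\omega)$ in the statement is exactly the cost of the trace in that matrix MGF step. Your plan — truncate, then invoke Pinelis' dimension-free martingale inequality in Hilbert space, then control the centering bias — is a genuinely different route, and your opening observation (a net over $S^{d-1}$ would cost $\sqrt d$, so one must exploit the subGaussianity of each individual $\|x_i\|$) is correct and well taken. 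If it worked as written it would in fact give a \emph{dimension-free} parameter $\sqrt{\sum_i\zeta_i^2\log(1/\omega)}$, strictly stronger than the cited $\sqrt{\sum_i\zeta_i^2\log(2d/\omega)}$.

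There is, however, a genuine gap at the one step you wave at. After truncating at $\tau_i=\zeta_i\sqrt{2\log(4k/\omega)}$ and applying Pinelis you get $\lesssim\sqrt{\sum_i\zeta_i^2\,\log(k/\omega)\,\log(1/\omega)}$, a \emph{product} of logarithms, and your claim that ``stratified truncation collapses the extra logarithm'' is not justified and, as described, fails. Pinelis' Hoeffding-type bound sees only the almost-sure bounds $\|y_i^{(\ell)}\|\le\tau_i2^{-\ell}$; dyadic peeling therefore yields
\begin{align*}
\sum_{\ell\ge 0}c\sqrt{\textstyle\sum_i\tau_i^2\,4^{-\ell}\,\log(1/\omega_\ell)}
\;=\;c\sqrt{\textstyle\sum_i\tau_i^2}\;\sum_{\ell\ge 0}2^{-\ell}\sqrt{\log(1/\omega)+\ell}
\;\asymp\;\sqrt{\textstyle\sum_i\tau_i^2\,\log(1/\omega)},
\end{align*}
i.e.\ exactly the same product $\sqrt{\sum_i\zeta_i^2\,\log(k/\omega)\log(1/\omega)}$ as the single-level truncation, because the a.s.\ bounds at the levels telescope back to $O(\tau_i)$. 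The rarity of the large levels — the real source of savings — is invisible to a Hoeffding/Pinelis bound; to exploit it you would need a variance-aware (Bernstein/Freedman-type) martingale inequality in Hilbert space, with a careful accounting of the per-level conditional variances $\E[\|y_i^{(\ell)}\|^2\mid\calF_{i-1}]\lesssim\zeta_i^2 e^{-c\,4^{-\ell}\log(k/\omega)}$, plus a term for the bounded part. That is a substantively different and more delicate argument than the one you sketch, and it is the crux of the lemma; without it your proof establishes only the weaker $\sqrt{\sum_i\zeta_i^2\,\log(k/\omega)\log(1/\omega)}$ bound. (The bias estimate and the reduction to deterministic $\zeta_i$, which you flag as routine, are indeed fine.)
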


\begin{theorem}[Matrix Azuma Inequality, \cite{tropp2012user}]
\label{lm:matrix_berstein_ineq}
    Consider a finite sequence of random self-adjoint  matrices $X_1,\cdots,X_n$ with common dimensions $d\times d$ and  $\E[X_i\mid \calF_{i-1}]=0,X_i\preceq A_i$ a.s., $\forall i$.
    Let $\sigma^2=\|\sum_{i=1}^nA_i^2\|$. Let $S=\sum_{i=1}^nX_i$. 
    Then for any $t\ge 0$, we have
    \begin{align*}
        \Pr[\|S\|\ge t]\le d\cdot \exp(-\frac{-t^2}{8\sigma^2}).
    \end{align*}
\end{theorem}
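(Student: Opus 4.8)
The plan is to run the matrix Laplace-transform (Ahlswede--Winter--Tropp) argument. Assuming the $X_i$ are Hermitian (as the appearance of $\E[X_i^2]$ suggests; otherwise one passes to the Hermitian dilations $\bigl(\begin{smallmatrix}0 & X_i\\ X_i^{*} & 0\end{smallmatrix}\bigr)$, which turn $\|S\|$ into a top eigenvalue and only replace $d$ by the sum of the two block dimensions), it suffices to bound $\lambda_{\max}(S)$: since $\|S\|=\max\{\lambda_{\max}(S),\lambda_{\max}(-S)\}$ and $\{-X_i\}$ satisfies the same hypotheses with the same $\sigma^2$ and $L$, a one-sided tail bound for $\lambda_{\max}$ yields the operator-norm bound (up to the usual doubling of $d$, absorbed in the stated form). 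For $\theta>0$, Markov's inequality applied to the monotone map $s\mapsto e^{\theta s}$, together with $e^{\theta\lambda_{\max}(S)}=\lambda_{\max}(e^{\theta S})\le \tr e^{\theta S}$, gives
\[
\Pr[\lambda_{\max}(S)\ge t]\;\le\; e^{-\theta t}\,\E\,\tr e^{\theta S},
\]
so everything reduces to bounding the matrix mgf $\E\,\tr e^{\theta S}$.

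The crux is the subadditivity of matrix cumulant generating functions: for independent Hermitian $X_i$,
\[
\E\,\tr\exp\Bigl(\theta\textstyle\sum_{i} X_i\Bigr)\;\le\; \tr\exp\Bigl(\textstyle\sum_i \log\E\, e^{\theta X_i}\Bigr).
\]
I would invoke this as a black box (it follows from Lieb's concavity theorem via Tropp's master-tail-bound machinery) rather than reprove it; this is the one genuinely non-commutative ingredient. It then remains to bound each local term $\log\E\,e^{\theta X_i}$. For this I would use the scalar estimate $e^{z}\le 1+z+(e^{c}-c-1)(z/c)^2$, valid for $|z|\le c$ and $c>0$ (proved termwise from the exponential series, since $z^{k-2}\le c^{k-2}$ for $k\ge 2$), applied with $c=\theta L$ to the Hermitian matrix $\theta X_i$, whose eigenvalues lie in $[-\theta L,\theta L]$; as both sides are functions of the single matrix $X_i$, the inequality passes to the operator (Loewner) order. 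Taking expectations and using $\E X_i=0$ together with $I+A\preceq e^{A}$ yields $\E\,e^{\theta X_i}\preceq \exp\!\bigl(g(\theta)\,\E[X_i^2]\bigr)$, hence $\log\E\,e^{\theta X_i}\preceq g(\theta)\,\E[X_i^2]$ with $g(\theta)=\dfrac{e^{\theta L}-\theta L-1}{L^{2}}$ for $0<\theta<3/L$.

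Assembling, $\E\,\tr e^{\theta S}\le \tr\exp\!\bigl(g(\theta)\sum_i\E[X_i^2]\bigr)\le d\exp\!\bigl(g(\theta)\,\lambda_{\max}(\sum_i\E[X_i^2])\bigr)=d\,e^{g(\theta)\sigma^2}$, using $\tr e^{M}\le d\,e^{\lambda_{\max}(M)}$ and the positivity of $g(\theta)$ and of $\sum_i\E[X_i^2]$. Combined with the Laplace bound this gives $\Pr[\lambda_{\max}(S)\ge t]\le d\exp\!\bigl(-\theta t+g(\theta)\sigma^2\bigr)$. Finally I would optimize in $\theta$: bounding $g(\theta)\le \dfrac{\theta^2/2}{1-\theta L/3}$ and choosing $\theta=\dfrac{t}{\sigma^2+Lt/3}$ (which indeed satisfies $\theta L<3$) makes the exponent equal to $-\dfrac{t^2/2}{\sigma^2+Lt/3}$, which is the claimed bound (the displayed exponent in the statement should read $-\tfrac{t^2/2}{\sigma^2+Lt/3}$). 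The only real obstacle is the matrix-cgf subadditivity step; the remainder is scalar calculus lifted eigenvalue-wise plus a one-variable optimization, so I would cite Tropp for the former and carry out only the local cgf estimate and the optimization in detail.
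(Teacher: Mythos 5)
The paper states this as a cited black box from Tropp (2015) and gives no proof of its own, so there is no ``paper proof'' to compare against. Your sketch is a correct and essentially faithful reproduction of Tropp's standard argument: reduce to the Hermitian case and to a one--sided tail for $\lambda_{\max}$; apply the matrix Laplace transform $\Pr[\lambda_{\max}(S)\ge t]\le e^{-\theta t}\,\E\,\tr e^{\theta S}$; invoke Lieb's concavity theorem for the subadditivity $\E\,\tr\exp(\theta\sum_i X_i)\le \tr\exp(\sum_i\log\E e^{\theta X_i})$; bound each local cgf via the eigenvalue-wise scalar inequality $e^z\le 1+z+(e^{c}-c-1)(z/c)^2$ on $[-c,c]$ with $c=\theta L$ to get $\log\E e^{\theta X_i}\preceq g(\theta)\,\E[X_i^2]$; and optimize over $\theta$ using $g(\theta)\le\frac{\theta^2/2}{1-\theta L/3}$ and $\theta=\frac{t}{\sigma^2+Lt/3}$. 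The arithmetic at the last step checks out: $\theta t-g(\theta)\sigma^2\ge \frac{t^2}{\sigma^2+Lt/3}-\frac{t^2/2}{\sigma^2+Lt/3}=\frac{t^2/2}{\sigma^2+Lt/3}$, and $\theta L<3$ automatically.

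Two small remarks. First, you are right that the displayed exponent in the statement is mistyped: the double sign and the missing $1/2$ should be corrected to $\exp\!\bigl(-\frac{t^2/2}{\sigma^2+Lt/3}\bigr)$. Second, your justification of the scalar inequality (``$z^{k-2}\le c^{k-2}$ for $k\ge 2$'') is stated only for $z\ge 0$; for $z\in[-c,0)$ one should observe that for even $k$, $z^k=|z|^k\le z^2c^{k-2}$, and for odd $k\ge 3$, $z^k\le 0\le z^2c^{k-2}$, so the termwise bound still holds on all of $[-c,c]$. This is a cosmetic gap in the write-up, not in the mathematics. Also, strictly speaking the two-sided (or dilated) version gives a leading factor $2d$ (resp.\ $d_1+d_2$) rather than $d$; you flag this and it is immaterial for the paper's $\tilde O$ usage.
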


\section{SOSP}
We make the following assumption for our main result.
\begin{assumption}
\label{assump:SOSP}
Let $G,\rho,M,B>0$.
 Any function drawn from $\calP$ is $G$-Lipschitz, $\rho$-Hessian Lipschitz, and $M$-smooth, almost surely.
 Moreover, we are given a public initial point $x_0$ such that $\FP(x_0)-\inf_{x}\FP(x)\le B$.
\end{assumption}

We modify the Stochastic SpiderBoost used in \cite{ganesh2023private} and state it in Algorithm~\ref{alg:pri_spider}.
The following standard lemma plays a crucial role in finding stationary points of smooth functions:
\begin{lemma}
\label{lm:function_value_decrease}
    Assume $F$ is $M$-smooth and let $\eta=1/M$.
    Let $x_{t+1}=x_t-\eta g_t $.
    Then we have
    $
    F(x_{t+1})\le F(x_t)+\eta \|g_t\|\cdot\|\nabla F(x_t)-g_t\|-\frac{\eta}{2}\|g_t\|^2.$
    Moreover, if $\|\nabla F(x_t)\|\ge \gamma$ and $\|g_t-\nabla F(x_t)\|\le \gamma/4$, we have
    \begin{align*}
        F(x_{t+1})-F(x_t)\le -\eta \|g_t\|^2/16.
    \end{align*}
\end{lemma}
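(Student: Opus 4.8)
The plan is to invoke the standard descent inequality implied by $M$-smoothness and then split the inner product term using the gradient estimation error $\nabla F(x_t)-\tnabla_t$. First I would recall that $M$-smoothness gives, for all $x,y\in\calK$, $F(y)\le F(x)+\langle\nabla F(x),y-x\rangle+\frac{M}{2}\|y-x\|^2$. Applying this with $y=x_{t+1}=x_t-\eta\tnabla_t$ and using $\eta=1/M$ (so that $\frac{M\eta^2}{2}=\frac{\eta}{2}$) yields $F(x_{t+1})\le F(x_t)-\eta\langle\nabla F(x_t),\tnabla_t\rangle+\frac{\eta}{2}\|\tnabla_t\|^2$.

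Next I would write $\nabla F(x_t)=\tnabla_t+(\nabla F(x_t)-\tnabla_t)$, so that $\langle\nabla F(x_t),\tnabla_t\rangle=\|\tnabla_t\|^2+\langle\nabla F(x_t)-\tnabla_t,\tnabla_t\rangle$. Substituting this and bounding the cross term by Cauchy--Schwarz, $-\langle\nabla F(x_t)-\tnabla_t,\tnabla_t\rangle\le\|\nabla F(x_t)-\tnabla_t\|\cdot\|\tnabla_t\|$, collapses the right-hand side to $F(x_t)+\eta\|\tnabla_t\|\cdot\|\nabla F(x_t)-\tnabla_t\|-\frac{\eta}{2}\|\tnabla_t\|^2$, which is exactly the first claimed inequality.

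For the ``moreover'' part I would use the triangle inequality twice. From $\|\tnabla_t-\nabla F(x_t)\|\le\gamma/4$ together with $\|\nabla F(x_t)\|\ge\gamma$ we get $\|\tnabla_t\|\ge\|\nabla F(x_t)\|-\gamma/4\ge 3\gamma/4$, and hence $\|\nabla F(x_t)-\tnabla_t\|\le\gamma/4\le\frac{1}{3}\|\tnabla_t\|$. Plugging this into the first inequality gives $F(x_{t+1})-F(x_t)\le\eta\big(\tfrac{1}{3}-\tfrac{1}{2}\big)\|\tnabla_t\|^2=-\frac{\eta}{6}\|\tnabla_t\|^2\le-\frac{\eta}{16}\|\tnabla_t\|^2$, as desired.

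I do not expect any real obstacle here: the argument is a routine combination of the smoothness descent lemma, Cauchy--Schwarz, and the triangle inequality. The only point requiring slight care is to bound the error $\|\nabla F(x_t)-\tnabla_t\|$ \emph{relative to} $\|\tnabla_t\|$ (rather than relative to $\gamma$), so that the final per-step decrease is phrased in terms of $\|\tnabla_t\|^2$ as in the statement; the looser constant $1/16$ in the conclusion (versus the $1/6$ one actually obtains) leaves comfortable slack.
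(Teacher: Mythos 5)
Your proof is correct and takes exactly the same route as the paper: apply the $M$-smoothness descent inequality with $\eta=1/M$, split $\nabla F(x_t)$ as $\tnabla_t+(\nabla F(x_t)-\tnabla_t)$, and bound the cross term with Cauchy--Schwarz. For the ``moreover'' part the paper simply states that ``the conclusion follows from the calculation''; your explicit derivation (showing $\|\tnabla_t\|\ge 3\gamma/4$, hence the error is at most $\tfrac{1}{3}\|\tnabla_t\|$, giving a decrease of $\tfrac{\eta}{6}\|\tnabla_t\|^2\ge\tfrac{\eta}{16}\|\tnabla_t\|^2$) is the intended calculation and is correct.
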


\begin{proof}
By the assumption of the smoothness, we know
\begin{align*}
     F (x_{t+1}) &\le  F (x_t)+\langle \nabla  F (x_t),x_{t+1}-x_t\rangle+\frac{M}{2}\|x_{t+1}-x_t\|^2\\
    &=  F (x_t)-\eta/2 \|g_t\|^2-\langle \nabla F (x_t)-g_t,\eta g_t\rangle\\
    &\le  F (x_t)+\eta\|\nabla F (x_t)-g_t\|\cdot \|g_t\|-\frac{\eta}{2}\|g_t\|^2.
\end{align*}

When $\|\nabla F(x_t)\|\ge \gamma$ and $\|g_t-\nabla F(x_t)\|\le \gamma/4$, the conclusion follows from the calculation.
\end{proof}

Lemma~\ref{lm:function_value_decrease} shows that one can be able to find the FOSP with inexact gradient estimates as long as the estimated error is small enough.
A key challenge in finding an SOSP, compared to an FOSP, is ensuring that the algorithm can effectively escape saddle points. 
Existing analyses of saddle point escape using inexact gradients are insufficient for our purposes.  
To address this, we propose a new approach that leverages both inexact gradients and Hessians for escaping saddle points. The details are presented in the following subsection.

\subsection{Escape Saddle Point with Hessian}
\label{sec:escape_subprocedure}

We present the subprocedure for escaping saddle points in this section, with its pseudocode provided in Algorithm~\ref{alg:escape_subp}.  
The core idea is straightforward: given a sufficiently accurate estimate of the function's Hessian, we can apply the power method to escape the saddle point.  
If the Hessian's smallest eigenvalue is sufficiently negative, then after a certain number of steps, the iterate will have moved a significant distance, resulting in a substantial decrease in function value—indicating a successful escape.

\begin{definition}
    We say the estimation $(g,H)$ of the pair of gradient and Hessian is $(\gamma,\varkappa)$-accurate at point $x$, if
    \begin{align*}
        \|g-\nabla F(x)\|_2\le\gamma, \|H-\nabla^2 F(x)\|_2\le \varkappa.
    \end{align*}
\end{definition}

\begin{algorithm}[htb]
\caption{Escape from Saddle Point}
\label{alg:escape_subp}
\begin{algorithmic}[1]
\State \textbf{Input:} initial point $x_0$ such that $\|\nabla F(x)\|\le \alpha$, $(\gamma,\varkappa)$-accurate estimation pair $(g,H)$ at $x_0$, parameters $\Gamma,\Xi$
\State \textbf{Process:}
\State $y\leftarrow x_0$
\For{$t=1,\cdots,\Gamma$}
\State $g_{t-1}= g+H(x_{t-1}-x_0))+\zeta_{t-1}$, where $\zeta_{t-1}\sim \calN(0,\sigma_{t-1}^2I_d)$. \label{ln:gradient_est}
\State $x_t=x_{t-1}-\eta g_{t-1}$
\If{$\|x_t-x_0\|\ge \Xi $}
\State $y\leftarrow x_t$
\State \textbf{Break}
\EndIf
\EndFor
\State \textbf{Output:} $y$
\end{algorithmic}
\end{algorithm}

The Gaussian noise $\zeta_{t-1}$ added in Line~\ref{ln:gradient_est} is for the privacy purpose.
We have the following guarantee of Algorithm~\ref{alg:escape_subp}:

\begin{theorem}
\label{thm:escape_subp}
    Suppose the function $F$ satisfies the Assumption~\ref{assump:SOSP}, and the initial point is a saddle point such that $\|\nabla F(x)\|\le \alpha$ and $\nabla^2 F(x) \succeq  -\sqrt{\rho \alpha}I_d$.
    When $\alpha\ge \gamma\log^3(dBM/\rho\iota)$, $\sigma_t = \frac{\gamma}{\sqrt{d\log(T/\iota)}}$ $\Gamma=\Tilde{O}(M/\sqrt{\rho\gamma})$, $\varkappa\Xi\le \gamma,\eta=1/M$ and $\Xi=\sqrt{\gamma/\rho}$, then with probability at least $1-\iota$, the output $y$ satisfies that
    \begin{align*}
        F(y)-F(x_0)\le -\Phi,
    \end{align*}
    where $\Phi=\Omega(\frac{\gamma^{3/2}}{\sqrt{\rho}\log^3(dMZb/\rho\gamma\iota)})$.
\end{theorem}

We have the following guarantee on $g_t$:
\begin{lemma}
\label{lm:gradient_est_error}
    With probability at least $1-\iota/2$, for all $t\in[\Gamma]$, we have
    \begin{align*}
        \|g_{t-1}-\nabla F(x_{t-1})\|_2\le 2\gamma + (\varkappa+\rho\Xi)\Xi.
    \end{align*}
\end{lemma}

Let $\varrho_t=\nabla F(x_t)-g_{t}$ denote the difference between the true gradient and our estimation.
We use the following standard technical claims to connect the distance of the trajectory and the function value change.

\begin{claim}[Lemma 10 in \cite{WCX19}]
\label{clm:function_value_dec}
    We have 
    \begin{align*}
        F(x_{t+1})-F(x_t)\le -\frac{\eta}{4}\|\nabla F(x_t)\|_2^2+5\eta \|\varrho_t\|_2^2,
    \end{align*}
\end{claim}

\begin{claim}[Lemma 11 in \cite{WCX19}]
\label{clm:distance_func_value}
    For any $t\in[\Gamma]$, one has
    \begin{align*}
        \|x_t-x_0\|_2^2\le 8\eta \Gamma(F(x_0)-F(x_{t}))+50\eta^2t\sum_{i=1}^t\|\varrho_i\|_2^2.
    \end{align*}
\end{claim}

In Algorithm~\ref{alg:escape_subp}, we halt the algorithm whenever we find a point $x_t$ such that $\|x_t-x_0\|_2\ge\Xi$.
We use the following lemma to demonstrate that a large distance at $x_t$ means a large function value decrease at $x_t$, that is, it escapes from the saddle point successfully:
\begin{lemma}
\label{lm:large_dis_to_value_dec}
    Suppose Algorithm~\ref{alg:escape_subp} halts in advance when condition $\|x_t-x_0\|\ge \Xi$ is satisfied, then we have
    \begin{align*}
        F(x_t)-F(x_0)\le -\Phi.
    \end{align*}
\end{lemma}



Similar to previous works \cite{jin2017escape,WCX19}, we use a coupling argument to prove that we can escape the saddle point with high probability.
Fix the sequence of Gaussian noise $(\zeta_1,\cdots,\zeta_\Gamma)$ and ensure $\|\zeta_t\|_2\le\gamma$ for all $t\in[\Gamma]$.

Let $y(x)$ denote the output $y$ conditional on the first iterate $x_1=x$.
Define the set of stuck region around $x_0-\eta g$:
\begin{align}
\label{eq:stuckset}
    \calX(x_0)=\{x\mid x\in B(x_0-\eta g,r), \text{ and } \|y(x)-x_0\|_2\le \Xi\},
\end{align}
where $r=O(\eta\gamma)$.
Suppose $x_0$ is the saddle point, and let $\bfe_1$ be the minimum eigenvector of $H$.
We have the following Lemma:

\begin{lemma}
\label{lm:thick_ness_of_stuck_region}
Suppose $\varkappa\le \sqrt{\rho\gamma}/2$.
    For any two points $w,u\in B(x_0,r)$, if $w-u=\mu r \bfe_1$ with $\mu\ge \iota^2/16\sqrt{d}$, then at least one of $w,u$ is not in $\calX(x_0)$.
\end{lemma}


With these Lemmas, we can complete the proof of Theorem~\ref{thm:escape_subp}.
In particular, Theorem~\ref{thm:escape_subp} suggests that, if we meet a saddle point, then after the following $\Tilde{O}(1/\sqrt{\gamma})$ steps, the function value will decrease by at least $\Tilde{\Omega}(\gamma^{3/2})$. 
This means the function value decreases by $\Tilde{\Omega}(\gamma^2)$ on average for each step.

\subsection{Main Algorithm}
\label{sec:main_alg}

\begin{algorithm}[htb]
\caption{Stochastic Spider with Escaping Saddle Point SubProcedure}
\label{alg:pri_spider}
\begin{algorithmic}[1]
\State {\bf Input:} Dataset $\calD$, privacy parameters $\epsilon,\delta$, parameters of objective function $B,M,G,\rho$, parameter $\kappa$, failure probability $\omega$, batch size parameter $b$, noise parameters $\sigma_\tree,\{\sigma_t\}$

\State set $\drift_0=\kappa, \frozen_{-1}=1,\Delta_{-1}=0,\calD_{r}\leftarrow \calD, t=0, \EscapeFlag=\False$

\While{$t<T$ and the number of unused functions is larger than $b$}
\Statex \hspace{1em}\texttt{// Estimate gradient and Hessian with oracle (Algorithm~\ref{alg:gradient_oracle})}
\If{$\drift_{t}\ge \kappa$ or $\rmcount\ge \tau$}
\State $\nabla_t=\oracle_1(x_t)$, $H_t=\oracle_3(x_t)$, \label{ln:query_oracle_1}
\State $\drift_t=0$, 
 $\frozen_t=\frozen_{t-1}-1$, $\rmcount=0$ 
 \Else
\State $\Delta_t=\oracle_2(x_t,x_{t-1})$,  $\nabla_t=\nabla_{t-1}+\Delta_t$, $\tnabla_t=\nabla_t+\TREE(t)$
\State $\Delta_t^H=\oracle_4(x_t,x_{t-1})$, $H_t=H_{t-1}+\Delta_t^H$
\label{ln:query_oracle_2}
\EndIf
\Statex \texttt{// Escape Saddle Point if $\EscapeFlag=\True$}
\If {$\|\tnabla_{t}\|\le \gamma\log^3(BMd/\rho\omega) \bigwedge\frozen_{t-1}\le 0$}
\State Set $\frozen_t= \Gamma$, $\EscapeFlag=\True$, $g=\tnabla_t$, $H=H_t$, $\xanc=x_t$,
$\rmcount=\rmcount+1$
\EndIf
\If{$\EscapeFlag==\True$}
\State $g_t=g+H(x_t-\xanc)+\xi_t$, where $\xi_t\sim\calN(0,\sigma_t^2I_d)$
\Else 
\State $g_t=\tnabla_t$ \Comment{\texttt{Normal gradient descent}}
\EndIf
\State $x_{t+1}=x_{t}-\eta g_t,\drift_{t+1}=\drift_{t}+\|g_t\|_2$, 
$\frozen_{t+1}=\frozen_{t}-1$,
\If{$\|x_{t+1}-\xanc\|\ge \Xi$ or $\frozen_{t+1}\le 0$}
\State $\EscapeFlag=\False$
\EndIf
\State $t=t+1$
\EndWhile
\State {\bf Return:} $\{x_1,\cdots,x_t\}$
    
\end{algorithmic}
\end{algorithm}

Algorithm~\ref{alg:pri_spider} follows the SpiderBoost framework. We primarily focus on gradient oracles and estimators for simplicity, as the Hessian case follows the same principle. Moreover, we allow a larger error tolerance for a Hessian estimate $H_t$, approximately $\sqrt{\rho\gamma}$, compared to the gradient, which is $\gamma$. 
We discuss some key variables and parameters in it.

We either query $\oracle_1$ to estimate the gradient directly or query $\oracle_2$ to estimate the gradient difference between consecutive points. The term $\drift$ controls the estimated error: when $\drift_t$ is small, $\Delta_t$ remains a reliable estimator; when $\drift_t$ exceeds the threshold determined by the parameter $\kappa$, we obtain a fresh estimate from $\oracle_1$.  

The term $\frozen$ serves a technical role in the application of Theorem~\ref{thm:escape_subp}; specifically, when a potential saddle point is detected, we invoke the subprocedure described in Algorithm~\ref{alg:escape_subp}.  

The variable $\rmcount$ tracks the number of saddle point escapes since the last fresh gradient estimate $\Delta_t$ and Hessian estimate $H_t$. Once $\rmcount$ exceeds the threshold $\tau$, we force a fresh query to $\oracle_1$ and $\oracle_3$ to ensure privacy, following the Gaussian Mechanism.

We have the following guarantee of Algorithm~\ref{alg:pri_spider}.
\begin{proposition}
\label{prop:utility_spider}
    Under Assumption~\ref{assump:SOSP} and with oracles such that $\|\tnabla_t-\nabla F(x_t)\|\le \gamma$ and $\|H_t-\nabla^2 F(x_t)\|_2\le \sqrt{\rho\gamma}/2$ for any $t\in[T]$. 
    When $\sigma_\tree=\sigma_t = \frac{\gamma}{\sqrt{d\log(T/\iota)}}$, $\Gamma=\Tilde{O}(M/\sqrt{\rho\gamma})$, $\varkappa\Xi\le \gamma,\eta=1/M$ and $\Xi=\sqrt{\gamma/\rho}$, setting
    $T=\Tilde{O}(B/\eta \gamma^2)$, and supposing it does not halt before completing all $T$ iterations, with probability at least $1-T\iota$, at least one point in the output set $\{x_1,\cdots,x_T\}$ of Algorithm~\ref{alg:pri_spider} is $\Tilde{O}(\gamma)$-SOSP.
\end{proposition}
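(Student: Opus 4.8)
The plan is to run a potential-function argument on $F$ along the trajectory, splitting the $T$ iterations into two regimes depending on whether the current point looks like an approximate FOSP or not. By hypothesis the gradient estimators are uniformly good, $\|\tnabla_t - \nabla F(x_t)\| \le \gamma$, so Lemma~\ref{lm:function_value_decrease} (with the role of $\gamma$ there taken to be a constant multiple of $\gamma$, using the $\gamma/4$ slack) tells us that whenever $\|\nabla F(x_t)\| \gtrsim \gamma$ we get a per-step decrease $F(x_{t+1}) - F(x_t) \le -\Omega(\eta\gamma^2)$. So if more than, say, half the iterations were of this ``large gradient'' type, the total decrease would exceed $\Omega(T\eta\gamma^2) = \Omega(B\log(\cdot))$, contradicting $F(x_0) - \inf F \le B$ (note $F \ge \inf_x \FP(x) > -\infty$ under Assumption~\ref{assump:SOSP}). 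Hence at least a constant fraction of the iterates satisfy $\|\nabla F(x_t)\| \le O(\gamma)$; call these the \emph{FOSP-like} iterates.

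Next I would rule out the possibility that \emph{all} FOSP-like iterates are saddle points with $\smin(\nabla^2 F(x_t)) \le -\sqrt{\rho\alpha}$ for $\alpha = \Theta(\gamma \log^3(dBM/\rho\iota))$. The mechanism here is Lemma~\ref{lem:escape_saddle_point}: if $x_t$ is such a saddle point, then since the algorithm adds the Gaussian perturbation $\zeta_2 \sim \calN(0, \frac{\gamma^2}{d\log(d/\iota)}I_d)$ to the gradient estimate at that step (this is where the choice $\zeta = \gamma/\sqrt{\log(d/\iota)}$ in the statement enters) and maintains $\|\nabla_s - \nabla F(x_s)\| \le \gamma$ over the subsequent $\Gamma = \Tilde{O}(1/\sqrt{\rho\gamma})$ steps, the function value drops by $\Tilde{\Omega}(\gamma^{3/2}/\sqrt{\rho})$ over that window — i.e.\ an amortized decrease of $\Tilde{\Omega}(\gamma^2)$ per step, matching the large-gradient regime up to logs. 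I would partition the timeline into disjoint length-$\Gamma$ epochs; each epoch that begins at a saddle point contributes $\Tilde{\Omega}(\gamma^{3/2}/\sqrt{\rho})$ to the total decrease, while using Lemma~\ref{lm:function_value_decrease} in the first display form, $F(x_{t+1}) \le F(x_t) + \eta\|\tnabla_t\|\cdot\gamma - \frac{\eta}{2}\|\tnabla_t\|^2 \le F(x_t) + \frac{\eta\gamma^2}{2}$, every other step loses at most $O(\eta\gamma^2)$, so no epoch ever \emph{increases} $F$ by more than $\Tilde{O}(\eta\gamma^2 \cdot \Gamma)$, which is dominated by the saddle-escape gain. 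A counting argument identical to the first regime then bounds the number of saddle epochs by $\Tilde{O}(B\sqrt{\rho}/\gamma^{3/2})$, and since $T = \Tilde{O}(B/\eta\gamma^2)$ with $\eta = 1/M$ this is a vanishing fraction of all epochs once the logarithmic factors in $T$ are chosen large enough. Therefore some FOSP-like iterate $x_t$ additionally satisfies $\smin(\nabla^2 F(x_t)) > -\sqrt{\rho\alpha}$, i.e.\ it is an $\alpha$-SOSP with $\alpha = \Tilde{O}(\gamma)$.

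Two technical points need care. First, the escape lemma requires the perturbation-plus-error decomposition $\nabla_0 = \nabla F(x_0) + \zeta_1 + \zeta_2$ with $\|\zeta_1\| \le \gamma$ exactly at the iteration where a saddle is encountered; I need to confirm that Algorithm~\ref{alg:pri_spider} injects the fresh Gaussian $\zeta_2$ at \emph{every} step (or at least at every step where $\oracle_1$ is freshly queried, and argue a saddle-escape window always contains such a step), so that the lemma can be invoked no matter which iterate turns out to be the bad saddle. Second, all of these are high-probability statements: the gradient-error bound $\|\tnabla_t - \nabla F(x_t)\| \le \gamma$, the saddle-escape event, and the norm-subGaussian / matrix-Bernstein concentration behind them each fail with probability $\le \iota$ per step, so a union bound over the $\le T$ steps (and $\le T/\Gamma \le T$ epochs) gives the claimed $1 - T\iota$. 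The main obstacle I anticipate is the bookkeeping in the second regime: making the epoch decomposition interact cleanly with the ``does not halt'' hypothesis and with the fact that saddle escapes take $\Gamma$ steps (so a naive per-step accounting double-counts), and verifying that the logarithmic blow-up in $T$ needed to absorb the $\log^3(dBM/\rho\iota)$ factor from $\alpha$ is consistent with the $\Tilde{O}$ in the statement. The rest is a routine amortized-analysis calculation of the kind already carried out in \cite{ganesh2023private}.
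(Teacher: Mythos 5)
Your proposal is correct and follows essentially the same potential-function / amortized-decrease argument as the paper's proof: large-gradient steps decrease $F$ by $\Omega(\eta\gamma^2)$ via Lemma~\ref{lm:function_value_decrease}, saddle steps trigger a $\Tilde{\Omega}(\gamma^{3/2}/\sqrt{\rho})$ decrease over a $\Gamma$-length window via Lemma~\ref{lem:escape_saddle_point}, and if no iterate were an SOSP the cumulative decrease over $T = \Tilde{O}(B/\eta\gamma^2)$ steps would exceed $B$, a contradiction with Assumption~\ref{assump:SOSP}. The two technical caveats you flag are both resolved by the algorithm's design rather than by the proof: the Gaussian $g_t$ is injected exactly when $\|\tnabla_{t-1}\|$ falls below the threshold $\gamma\log^3(\cdot)$ while $\frozen_{t-1}\le 0$, i.e.\ precisely at the start of each saddle-escape window and never inside one (so the windows are disjoint by construction, dispensing with your explicit epoch decomposition and the double-counting worry), and the $1-T\iota$ probability is indeed a union bound over the at most $T$ invocations of the oracle-accuracy and escape-lemma events.
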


The proof intuition of Proposition~\ref{prop:utility_spider} is, if we do not find an $O(\gamma)$-SOSP, then on average, the function value will at least decrease by $\Omega(\eta/\gamma^2)$.
As we know $\FP(x_0)\le \FP^*+B$, hence $O(B/\eta\gamma^2)$ steps can ensure we find an $O(\gamma)$-SOSP.
See the proof of Proposition~\ref{prop:utility_spider} in the Appendix.

It suffices to build the private oracles with provable utility guarantees.
We construct the gradient oracles below in Algorithm~\ref{alg:gradient_oracle}.
\begin{lemma}[Oracles with bounded error]
\label{lem:oracle_bounded_error}
    Under assumption~\ref{assump:SOSP}, let $\iota>0$ and use Algorithm~\ref{alg:gradient_oracle} as instantiations of $\oracle_1$ and $\oracle_2$.
    If $\calD$ is i.i.d. drawn from distribution $\calP$,
    we  have:\\
    $(1)$ for any $x_t$, we have $\E[\oracle_1(x_t)]=\nabla F(x_t)$ and
    \begin{align*}
        \Pr[\|\oracle_1(x_t)-\nabla F(x_t)\|\ge \zeta_1]\le \iota,
    \end{align*}
    where $\zeta_1=O(G\sqrt{\log(d/\iota)/b})$.\\
    $(2)$ for any $x_t,x_{t-1}$, we have $\E[\oracle_2(x_t,x_{t-1})]=\nabla F(x_t)-\nabla F(x_{t-1})$ and
    \begin{align*}
        \Pr[\|\oracle_2(x_t,x_{t-1})-(\nabla F(x_t)-\nabla F(x_{t-1}))\|\ge \zeta_2]\le \iota,
    \end{align*}
    where $\zeta_2=O(M\|x_t-x_{t-1}\|\sqrt{\log(d/\iota)/b_t})$.\\
    (3) for any $x_t$. we have $\E[\oracle_3(x_t)]=\nabla^2 F(x_t)$ and 
    \begin{align*}
        \Pr[\|\oracle_3(x_t)-\nabla^2 F(x_t)\|\ge \zeta_3]\le\iota,
    \end{align*}
    where $\zeta_3=O(M\sqrt{\log(d/\iota)/b})$.\\
    (4)for any $x_t,x_{t-1}$, we have $\E[\oracle_4(x_t,x_{t-1})]=\nabla^2 F(x_t)-\nabla^2 F(x_{t-1})$ and
    \begin{align*}
        \Pr[\|\oracle_4(x_t,x_{t-1})-(\nabla F(x_t)-\nabla F(x_{t-1}))\|\ge \zeta_4]\le \iota,
    \end{align*}
    where $\zeta_4=O(\rho\|x_t-x_{t-1}\|\sqrt{\log(d/\iota)/b_t})$.
\end{lemma}

From now on, we adopt Algorithm~\ref{alg:gradient_oracle} as the gradient oracles for Line~\ref{ln:query_oracle_1} and Line~\ref{ln:query_oracle_2} respectively in Algorithm~\ref{alg:pri_spider}, and we set $\eta=1/M$.
We then bound the error between gradient estimator $\nabla_t$ and the true gradient $\nabla F(x_t)$ for Algorithm~\ref{alg:pri_spider}.

\begin{lemma}
\label{lm:oracle_error_t}
Suppose the dataset is drawn i.i.d. from the distribution $\calP$.
For any $1\le t\le T$ and letting $\tau_t\le t$ be the largest integer such that $\drift_{\tau_t}$ is set to be 0, with probability at least $1-T\iota$, for some universal constant $C>0$, we have
\begin{align}
\label{eq:graident_error}
\|\nabla_t-\nabla F(x_{t})\|^2\le C(\frac{G^2}{b}+\sum_{i=\tau_t+1}^{t}M^2\|x_i-x_{i-1}\|^2/b_i)\log(Td/\iota),\\
\|H_t-\nabla^2 F(x_t)\|^2\le C(\frac{M^2}{b}+\sum_{i=\tau_t+1}^t\rho^2\|x_i-x_{i-1}\|^2/b_i)\log^2(Td/\iota).
\end{align}
\end{lemma}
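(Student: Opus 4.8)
The plan is to decompose the error $\nabla_t - \nabla F(x_t)$ into contributions from the last call to $\oracle_1$ at step $\tau_t$ and the subsequent telescoping chain of $\oracle_2$ calls. By construction of Algorithm~\ref{alg:pri_spider}, since $\tau_t$ is the most recent step at which $\drift$ was reset to $0$, we have $\nabla_{\tau_t} = \oracle_1(x_{\tau_t})$ and for every $i$ with $\tau_t < i \le t$ we have $\nabla_i = \nabla_{i-1} + \oracle_2(x_i, x_{i-1})$ (note $\nabla_t$ refers to the pre-tree-noise estimator, not $\tnabla_t$). Telescoping gives
\begin{align*}
\nabla_t - \nabla F(x_t) = \bigl(\oracle_1(x_{\tau_t}) - \nabla F(x_{\tau_t})\bigr) + \sum_{i=\tau_t+1}^{t} \Bigl(\oracle_2(x_i,x_{i-1}) - \bigl(\nabla F(x_i) - \nabla F(x_{i-1})\bigr)\Bigr),
\end{align*}
since the true-gradient differences telescope to $\nabla F(x_t) - \nabla F(x_{\tau_t})$ and cancel against the $\oracle_1$ term's $\nabla F(x_{\tau_t})$.

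Next I would invoke the norm-subGaussian structure. By Lemma~\ref{lem:oracle_bounded_error}, the term $\oracle_1(x_{\tau_t}) - \nabla F(x_{\tau_t})$ is (zero-mean) $\nSG(O(G/\sqrt{b}))$, and each summand $\oracle_2(x_i,x_{i-1}) - (\nabla F(x_i) - \nabla F(x_{i-1}))$ is, conditioned on the filtration up to step $i$ (which determines $x_i, x_{i-1}$ and $b_i$), zero-mean $\nSG(O(M\|x_i - x_{i-1}\|/\sqrt{b_i}))$; this is the key point that the conditioning fixes the parameters $x_i, x_{i-1}, b_i$ so the martingale-difference hypothesis of Lemma~\ref{lem:concentration_nSG} applies. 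Then Lemma~\ref{lem:concentration_nSG} (Hoeffding-type bound for norm-subGaussian martingale differences) yields that with probability at least $1-\iota$,
\begin{align*}
\|\nabla_t - \nabla F(x_t)\| \le c\sqrt{\Bigl(\tfrac{O(G^2)}{b} + \sum_{i=\tau_t+1}^{t} \tfrac{O(M^2\|x_i-x_{i-1}\|^2)}{b_i}\Bigr)\log(2d/\iota)},
\end{align*}
and squaring gives the claimed inequality~\eqref{eq:graident_error} for a single fixed $t$. Finally, a union bound over all $t \in [T]$ (and absorbing the failure probabilities of the individual oracle guarantees into the same $\iota$-budget per step, adjusting constants) upgrades this to hold simultaneously for all $t$ with probability at least $1 - T\iota$.

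The main obstacle — or at least the point requiring the most care — is the adaptivity: the step sizes $\|x_i - x_{i-1}\|$ and the adaptive batch sizes $b_i$ are themselves random and depend on all the noise injected at earlier steps (including the tree-mechanism noise $\TREE(\cdot)$ and the escape-saddle Gaussian $g_t$). One must verify that at step $i$ these quantities are $\calF_{i-1}$-measurable (equivalently, measurable with respect to everything before the fresh randomness of the $i$-th oracle's batch draw), so that the conditional $\nSG$ bound from Lemma~\ref{lem:oracle_bounded_error} is legitimate and the sum genuinely forms a norm-subGaussian martingale-difference sequence to which Lemma~\ref{lem:concentration_nSG} applies. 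A secondary subtlety is bookkeeping the union bound so that the per-step failure probability used in Lemma~\ref{lem:oracle_bounded_error} and in Lemma~\ref{lem:concentration_nSG} together still sum to $T\iota$ after rescaling $\iota$ by a constant; this only affects the universal constant $C$. Everything else is a routine substitution of the oracle variance bounds into the concentration inequality.
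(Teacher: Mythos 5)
Your proof is correct and follows essentially the same approach as the paper's: telescope $\nabla_t-\nabla F(x_t)$ back to the last $\oracle_1$ call at step $\tau_t$, observe that the per-step oracle errors form a norm-subGaussian martingale-difference sequence with conditional parameters $O(G/\sqrt{b})$ and $O(M\|x_i-x_{i-1}\|/\sqrt{b_i})$ (via Lemma~\ref{lem:oracle_bounded_error}), apply the Hoeffding-type bound Lemma~\ref{lem:concentration_nSG}, and union bound over $t\in[T]$. Your write-up is actually more explicit than the paper's on two points the paper glosses over — the telescoping identity itself, and the $\calF_{i-1}$-measurability of $x_i,x_{i-1},b_i$ needed for the martingale hypothesis — both of which are genuine subtleties you identify correctly.
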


Now, we consider the noise added from the tree mechanism and the noise added in the stochastic power method to make the algorithm private.
\begin{lemma}
\label{lm:sigma_in_tree_mechanism}
    If we set $\sigma_\tree= (\frac{G}{b}+\max_{t}\frac{\|\tnabla_t\|}{b_t})\log(1/\delta)/\epsilon$ in the tree mechanism (Algorithm~\ref{alg: tree}), $\sigma_t=(\frac{M}{b}+\max_{t}\frac{\rho\|x_t-x_{t-1}\|}{b_t})\cdot\frac{2\Xi\sqrt{\Gamma\tau\log(1/\delta)}}{\epsilon}$ and use Algorithm~\ref{alg:gradient_oracle} as oracles, then Algorithm~\ref{alg:pri_spider} is $(\epsilon,\delta)$-DP.
\end{lemma}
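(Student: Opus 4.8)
The plan is to present Algorithm~\ref{alg:pri_spider} as an instance of the tree mechanism of Theorem~\ref{thm:tree_mech}. Take $\Sigma=T$ (the while-loop runs at most $T$ times), let the released state at step $t$ be $X_t:=\nabla_t$, and declare the step-$t$ ``message'' to be the increment $\calM_t:=\nabla_t-\nabla_{t-1}$ (with $\nabla_{-1}:=0$), so that $X_t=\sum_{j\le t}\calM_j$ is the running partial sum and $\tnabla_t=X_t+\TREE(t)+g_t$ is what the algorithm actually uses. On an $\oracle_2$ step (Line~\ref{ln:query_oracle_2}) the message is $\calM_t=\Delta_t=\oracle_2(x_t,x_{t-1})$; on an $\oracle_1$ step (Line~\ref{ln:query_oracle_1}) it is $\calM_t=\oracle_1(x_t)-\nabla_{t-1}$. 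With this identification, proving the lemma reduces to two things: verifying the structural hypothesis of Theorem~\ref{thm:tree_mech} (a uniform, over-all-auxiliary-inputs bound of $s$ on the sensitivity of each $\calM_t$ under a single-record change of $\calD$), and computing $s$; the prescribed $\sigma$ is then exactly the value the theorem produces.

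The structural point --- and the only place where the adaptive batch size needs care --- is that, conditioned on the released prefix $\tnabla_{0:t-1}$ (equivalently $X_{1:t-1}$) and on the mechanism's data-independent internal randomness (the tree noise and the $g_i$'s), \emph{everything} Algorithm~\ref{alg:pri_spider} does at step $t$ except reading the entries of its batch is deterministic and data-free: the iterates are recovered by $x_{i+1}=x_i-\eta\tnabla_i$, the quantities $\drift_i,\frozen_i$, the branch taken at each earlier step, and the batch sizes $b$ and $b_i$ are all functions of $\tnabla_{0:t-1}$; hence which positions of $\calD$ form the (pairwise disjoint) batches used so far, and therefore the batch read at step $t$, depend on the auxiliary input but not on the values in $\calD$. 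Consequently, for neighbouring $\calD,\calD'$ differing in one record, $\calM_t(X_{1:t-1},\calD)$ and $\calM_t(X_{1:t-1},\calD')$ either coincide (if that record is not in the step-$t$ batch) or differ by at most the sensitivity of a single averaged summand; moreover, since the batches are disjoint, the differing record lies in at most one batch, so on an $\oracle_1$ step the offset $\nabla_{t-1}$ is the same under $\calD$ and $\calD'$ and contributes nothing. This is precisely the ``$\|\calM_i(X_{1:i-1},Z_i)-\calM_i(X_{1:i-1},Z_i')\|\le s$ for all auxiliary inputs'' condition. The independent Gaussian perturbations $g_t$ only add data-independent noise and are absorbed by post-processing (equivalently, condition on them; each conditional mechanism is $(\epsilon,\delta)$-DP, hence so is the mixture).

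It remains to read off $s$ from Assumption~\ref{assump:SOSP}. On an $\oracle_1$ step, changing one of the $b$ functions changes $\frac1b\sum_z\nabla f(x_t;z)$ by at most $\frac{2G}{b}$, using $\|\nabla f(x_t;z)\|\le G$. On an $\oracle_2$ step, changing one of the $b_t$ functions changes $\frac1{b_t}\sum_z\bigl(\nabla f(x_t;z)-\nabla f(x_{t-1};z)\bigr)$ by at most $\frac{2M\|x_t-x_{t-1}\|}{b_t}$ by $M$-smoothness, and since $\eta=1/M$ gives $\|x_t-x_{t-1}\|=\eta\|\tnabla_{t-1}\|=\|\tnabla_{t-1}\|/M$, this is at most $\frac{2\|\tnabla_{t-1}\|}{b_t}$. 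Taking the maximum over all $T$ steps yields $s=O\bigl(\frac{G}{b}+\max_t\frac{\|\tnabla_t\|}{b_t}\bigr)$, and Theorem~\ref{thm:tree_mech} with $\Sigma=T$ then certifies $(\epsilon,\delta)$-DP once $\sigma$ is of the stated order (up to the $\sqrt{\log T}$ and the universal constant that the statement leaves implicit).

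I expect the second paragraph to be the crux: making rigorous that the adaptive choice $b_t\propto\|x_t-x_{t-1}\|$ does not leak anything --- i.e.\ peeling the randomness in the correct order so that the batch selection is genuinely a function of the noisy releases alone --- and correspondingly that each record still touches exactly one message. Once that is in place, the sensitivity arithmetic, the reduction to Theorem~\ref{thm:tree_mech}, the bound $\Sigma\le T$, and the treatment of $g_t$ by post-processing are all routine.
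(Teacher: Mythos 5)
Your overall strategy—present Algorithm~\ref{alg:pri_spider} as an instance of Theorem~\ref{thm:tree_mech}, verify the sensitivity hypothesis, and read off $\sigma$—is exactly the route the paper's (very terse) proof takes, and the part you correctly identify as the crux, that the adaptive $b_t$ is a deterministic function of the already-released $\tnabla_{0:t-1}$ and therefore the batch partition is data-free given the auxiliary inputs, is the right observation and is more than the paper itself says. The sensitivity arithmetic and the post-processing treatment of $g_t$ are also fine (the paper drops the factors of $2$ and the $\sqrt{\log T}$; you correctly note these are implicit constants).

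There is, however, a genuine gap in your step-$t$ message decomposition on $\oracle_1$ steps, and I would not call it routine. You set $\calM_t:=\oracle_1(x_t)-\nabla_{t-1}$ so that $\sum_{j\le t}\calM_j=\nabla_t$. But Theorem~\ref{thm:tree_mech} requires each $\calM_i$ to be a map $\calX^{i-1}\times\calZ_i\to\calX$, i.e.\ a function of the noisy prefix $X_{1:i-1}$ and the step-$i$ batch $Z_i$ \emph{only}. Your $\calM_t$ also depends on the clean running sum $\nabla_{t-1}$, which is a function of $Z_{1:t-1}$ and is \emph{not} recoverable from $X_{1:t-1}$ (it differs from $X_{t-1}$ by the tree noise). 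Consequently your claim that ``on an $\oracle_1$ step the offset $\nabla_{t-1}$ is the same under $\calD$ and $\calD'$'' is only true when the differing record happens to lie in the step-$t$ ($\oracle_1$) batch. If instead the differing record lies in an earlier $\oracle_2$ batch at some $t^*<t$, then $\nabla_{t-1}$ \emph{does} differ (by the change in $\calM_{t^*}$), so your $\calM_t$ also differs even though $Z_t$ is unchanged. In that case \emph{two} messages change, not one, and the per-message-sensitivity hypothesis of Theorem~\ref{thm:tree_mech} is not literally met; the ``either coincide or differ by one summand'' dichotomy in your second paragraph is false for these $\oracle_1$ steps.

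The conclusion is still correct, but it needs an extra argument that you (and the paper) leave out. Concretely: when the differing record lies in batch $t^*$, the only messages that change are $\calM_{t^*}$ (by some $\delta$ with $\|\delta\|\le s$) and the next $\oracle_1$ message $\calM_{t_1}$ with $t_1>t^*$ (by exactly $-\delta$, since $\nabla_{t_1}=\oracle_1(x_{t_1})$ resets and the perturbation cancels). Every dyadic tree node $(u,v)$ therefore has value change $\|\sum_{j\in[u,v]}(\calM_j-\calM_j')\|\le s$: it is $0$ if both $t^*,t_1$ or neither lie in $[u,v]$, and at most $\|\delta\|\le s$ otherwise. So the tree mechanism's noise scale is still sufficient (up to a constant for having two affected nodes per level rather than one). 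Either state and prove this small extension of Theorem~\ref{thm:tree_mech}, or—cleaner—restart a fresh tree at each $\oracle_1$ call and compose the per-epoch mechanisms in parallel across the disjoint data, so that every epoch's messages really are of the form $\calM_i(X_{1:i-1},Z_i)$ and the theorem applies verbatim. As written, your reduction is incomplete exactly where you anticipated the difficulty would be, just for a slightly different reason than the one you flagged.
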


\begin{algorithm}[ht]
    \caption{oracles}
    \label{alg:gradient_oracle}
    \begin{algorithmic}[1]
    \State {\bf gradient oracle $\oracle_1$}
    \State {\bf inputs:} $x_t$
    \State draw a batch size of $b$ among unused functions
    \State {\bf return:} $\frac{1}{b}\sum_{z}\nabla f(x_t;z)$\;
    \end{algorithmic}
    \hrulefill
    \begin{algorithmic}[1]
        \State {\bf gradient difference oracle $\oracle_2$}
        \State {\bf inputs:} $x_t,x_{t-1}$
        \State draw a batch size of $b_t$ among unused functions
        \State {\bf return:} $\frac{1}{b_t}\sum_{z}(\nabla f(x_t;z)-\nabla f(x_{t-1};z))$
    \end{algorithmic}
    \hrulefill
    \begin{algorithmic}[1]
        \State {\bf Hessian oracle $\oracle_3$}
        \State {\bf inputs:} $x_t$
    \State draw a batch size of $b$ among unused functions
    \State {\bf return:} $\frac{1}{b}\sum_{z}\nabla^2 f(x_t;z)$\;
    \end{algorithmic}
    \hrulefill
    \begin{algorithmic}[1]
        \State {\bf Hessian difference oracle $\oracle_4$}
        \State {\bf inputs:} $x_t,x_{t-1}$
        \State draw a batch size of $b_t$ among unused functions
        \State {\bf return:} $\frac{1}{b_t}\sum_{z}(\nabla^2 f(x_t;z)-\nabla^2 f(x_{t-1};z))$
    \end{algorithmic}
\end{algorithm}

With the noise added in mind,
 we get the high-probability error bound of gradient estimators $\tnabla_t$ and Hessian estimators $H_t$.

\begin{lemma}
\label{lm:gradient_universal_error}
In Algorithm~\ref{alg:pri_spider}, setting fixed batch size $b=G\sqrt{d}/\epsilon\alpha+G^2/\alpha^2+\frac{M\sqrt{d}}{\rho \epsilon}+\frac{M^2}{\rho\alpha}$, adaptive batch size $b_t=\max\{\frac{\|g_t\|\cdot\sqrt{d}}{\alpha\epsilon},\frac{\kappa\cdot\|g_t\|}{\alpha^2}, \frac{\rho\kappa\cdot\|g_t\|}{M^2\alpha} ,1\}$, $\Xi=\sqrt{\gamma/\rho}, \tau=\frac{\alpha^{3/2}}{M\sqrt{\rho}}, \Gamma=\Tilde{O}(M/\sqrt{\rho\gamma})$, $\sigma_t=2\log(1/\delta)\alpha/\sqrt{d},\forall t$ and $\sigma_\tree=2\log(1/\delta)\alpha/\sqrt{d}$ correspondingly according to Lemma~\ref{lm:sigma_in_tree_mechanism},
for each $t\in[T]$, we know Algorithm~\ref{alg:pri_spider} is $(\epsilon,\delta)$-DP,  and with probability at least $1-\iota$, $\|g_t-\nabla F(x_t)\|\le \gamma$, $\|H_t-\nabla^2 F(x_t)\|_2\le \sqrt{\rho\gamma}/2$, where
$\gamma= \Tilde{O}(\alpha). $
\end{lemma}

We need to show that we can find an $\gamma$-SOSP before we use up all the functions.
We need the following technical Lemma:
\begin{lemma}
\label{lm:dataset_size}
Consider the implementation of Algorithm~\ref{alg:pri_spider}. 
Suppose the size of dataset $\calD$ can be arbitrarily large with functions drawn i.i.d. from $\calP$, 
and we run the algorithm until  finding an $\Tilde{O}(\gamma)$-SOSP, then
with probability at least $1-T\iota$, the total number of functions we will use is bounded by 
\begin{align*}
    \Tilde{O}\Big(\frac{bBM}{\kappa\gamma} +BM(\frac{\sqrt{d}}{\gamma^2\epsilon}+\frac{\kappa}{\gamma^3}+\frac{\rho\kappa}{M^2\gamma^2}+\frac{1}{\gamma^2})\Big).
\end{align*}
\end{lemma}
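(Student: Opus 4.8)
The plan is to bound the number of functions consumed by the two kinds of oracle calls separately: the $\oracle_1$ calls, each of which costs $b$ fresh functions, and the $\oracle_2$ calls at step $t$, each costing $b_t = \max\{\|\tnabla_t\|\sqrt d/(\alpha\epsilon),\ \kappa\|\tnabla_t\|/\alpha^2,\ 1\}$ functions. Since $T = \tilde O(BM/\gamma^2)$ (from Proposition~\ref{prop:utility_spider} with $\eta=1/M$ and $\gamma=\tilde\Theta(\alpha)$), the contribution of the ``$+1$'' terms in $b_t$ is at most $\tilde O(T) = \tilde O(BM/\gamma^2)$, which matches the last term in the claimed bound. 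For the other two pieces of $b_t$, I would use $\sum_{t} \|\tnabla_t\| = \drift$-type telescoping: between two consecutive $\oracle_1$ calls the accumulated $\sum \|\tnabla_t\|$ (i.e. the drift) is at most $\kappa$ by the algorithm's reset rule, so summing over all epochs gives $\sum_{t=1}^{T}\|\tnabla_t\| \le \kappa \cdot (\#\text{epochs})$, and separately $\sum_{t=1}^T \|\tnabla_t\|^2$ is controlled by the function-value decrease (Lemma~\ref{lm:function_value_decrease}), giving $\sum_t \|\tnabla_t\|^2 \lesssim B/\eta = BM$ up to logs and the saddle-escaping correction from Lemma~\ref{lem:escape_saddle_point}. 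From $\sum_t \|\tnabla_t\|^2 \lesssim BM$ and Cauchy–Schwarz, $\sum_t \|\tnabla_t\| \le \sqrt{T \cdot BM} \lesssim BM/\gamma$.

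Given that, the $\oracle_2$ cost is
\[
\sum_{t}\Big(\frac{\|\tnabla_t\|\sqrt d}{\alpha\epsilon} + \frac{\kappa\|\tnabla_t\|}{\alpha^2} + 1\Big)
\ \lesssim\ \frac{\sqrt d}{\alpha\epsilon}\sum_t\|\tnabla_t\| + \frac{\kappa}{\alpha^2}\sum_t\|\tnabla_t\| + T,
\]
and plugging $\sum_t\|\tnabla_t\| \lesssim BM/\gamma$ and $T \lesssim BM/\gamma^2$ together with $\gamma = \tilde\Theta(\alpha)$ yields $\tilde O\big(BM(\tfrac{\sqrt d}{\gamma^2\epsilon} + \tfrac{\kappa}{\gamma^3} + \tfrac1{\gamma^2})\big)$, which is exactly the second group of terms in the statement. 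For the $\oracle_1$ cost I need to bound the number of $\oracle_1$ queries: $\oracle_1$ is invoked only when $\drift_t \ge \kappa$, and each such epoch accumulates drift $\sum\|\tnabla_t\| \ge \kappa$ (or the epoch was forced to end by a saddle-escape reset, of which there are at most $\tilde O(T/\Gamma) = \tilde O(\sqrt{\rho\gamma}\,B/\gamma^2)$). Hence the number of $\oracle_1$ calls is at most $\tfrac1\kappa\sum_t\|\tnabla_t\| + \tilde O(T/\Gamma) \lesssim \tfrac{BM}{\kappa\gamma} + \tilde O(\cdot)$, and multiplying by the per-call cost $b$ gives the first term $\tilde O(bBM/(\kappa\gamma))$, after checking the saddle-reset contribution is dominated.

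The remaining glue is to verify $\sum_{t=1}^{T}\|\tnabla_t\|^2 \lesssim BM\cdot\mathrm{polylog}$ rigorously. Here I would invoke Lemma~\ref{lm:function_value_decrease}: with $\eta=1/M$ and the error bound $\|\tnabla_t-\nabla F(x_t)\|\le\gamma$ from Lemma~\ref{lm:gradient_universal_error}, each step with $\|\tnabla_t\|$ not too small decreases $F$ by $\Omega(\eta\|\tnabla_t\|^2)$, while the steps with small $\|\tnabla_t\|$ contribute negligibly to $\sum\|\tnabla_t\|^2$; the only steps that can increase $F$ are within saddle-escape phases, and Lemma~\ref{lem:escape_saddle_point} guarantees each such phase still yields a net decrease $\tilde\Omega(\gamma^{3/2}/\sqrt\rho)$ over $\Gamma$ steps, so the total increase is absorbed. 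Telescoping over $F(x_0)-\inf F \le B$ then caps $\sum_t\|\tnabla_t\|^2$.

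The main obstacle I expect is precisely this accounting of the saddle-escape phases: inside a $\frozen$-phase the per-step decrease need not hold, the drift is reset to zero (so an extra $\oracle_1$ call is triggered), and one must argue (i) that the number of such phases is $\tilde O(T/\Gamma)$, (ii) that the extra $\oracle_1$ and $\oracle_2$ work they induce is lower order than $bBM/(\kappa\gamma)$, and (iii) that the function value cannot increase by more than a lower-order amount across them. Handling these bookkeeping terms carefully — and confirming all the $\mathrm{polylog}$ and $\gamma$-vs-$\alpha$ substitutions hidden in the $\tilde O$ — is where the real care is needed; the rest is telescoping and Cauchy–Schwarz.
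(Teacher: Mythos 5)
Your proof is correct and follows essentially the same route as the paper's. The crux in both is the bound $\sum_{t}\|\tnabla_t\|\lesssim\TO(BM/\gamma)$, which you obtain from $\sum_t\|\tnabla_t\|^2\lesssim\TO(BM)$ (via function-value telescoping and Lemma~\ref{lm:function_value_decrease}, where steps with small $\|\tnabla_t\|$ contribute an increase of at most $\eta\gamma^2$ each, absorbed since $|S|\cdot\eta\gamma^2\lesssim B$) together with Cauchy--Schwarz using $T=\TO(BM/\gamma^2)$; the paper instead splits into $S=\{t:\|\tnabla_t\|\le\gamma\}$ and $S^c$, bounding $\sum_S\|\tnabla_t\|\le\gamma|S|$ and $\sum_{S^c}\|\tnabla_t\|\le\frac1\gamma\sum_{S^c}\|\tnabla_t\|^2$. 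These two algebraic routes are equivalent and yield the same bound; the remainder — multiplying the $\oracle_1$ call count by $b$ and summing $b_t\lesssim(\sqrt d/\alpha\epsilon+\kappa/\alpha^2)\|\tnabla_t\|+1$ — is identical.

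One point you flag as a potential obstacle is in fact a non-issue, and resolving it removes the extra bookkeeping rather than requiring more of it. You write that a saddle-escape event resets the drift ``so an extra $\oracle_1$ call is triggered,'' and therefore add a $\TO(T/\Gamma)$ correction to the $\oracle_1$ count. But in Algorithm~\ref{alg:pri_spider} the saddle-escape branch sets $\drift_t=0$ \emph{after} the $\oracle_1$/$\oracle_2$ choice for iteration $t$ has already been made, so it does not force a call to $\oracle_1$; resetting $\drift$ to zero can only \emph{delay} the next time $\drift_t\ge\kappa$ holds. Consequently the number of $\oracle_1$ calls is already bounded by $1+\frac1\kappa\sum_t\|\tnabla_t\|$ with no $T/\Gamma$ correction: between two consecutive $\oracle_1$ calls the drift must climb from $0$ to at least $\kappa$ over the final segment after the last reset, and discarding earlier partial drift only increases $\sum_t\|\tnabla_t\|$ relative to the number of epochs. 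So the term you planned to ``check is dominated'' simply is not present, and the first term $\TO(bBM/\kappa\gamma)$ follows directly, exactly as in the paper.
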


Given the dataset size requirement, we can get the final bound on finding SOSP.

\begin{lemma}
\label{lm:find_SOSP}
    With $\calD$ of size $n$ drawn i.i.d. from $\calP$, setting $\kappa=\max\{\frac{\alpha\sqrt{d}}{\epsilon},(BGM)^{1/3}\}$,
    \begin{align*}
        \alpha=&O\Big(((BGM)^{1/3}+\sqrt{BM})(\frac{\sqrt{d}}{n\epsilon})^{1/2} +\frac{B^{\frac{2}{9}}M^{\frac{2}{9}}G^{\frac{5}{9}}+B^{\frac{4}{9}}M^{\frac{4}{9}}G^{\frac{1}{9}}}{n^{1/3}}\Big)\\
        &+O\Big(\frac{(MB)^{1/2}}{\sqrt{n}}+\frac{\sqrt{BM^2}}{\rho\sqrt{n}}+\frac{B^{1/3}M^{4/3}}{G^{1/6}\sqrt{\rho n}}+\frac{B^{2/3}G^{1/6}\sqrt{\rho}}{M^{1/3}\sqrt{n}}+\frac{B\rho\sqrt{d}}{Mn\epsilon}\Big), 
    \end{align*}
    and other parameters as in Lemma~\ref{lm:gradient_universal_error}, with probability at least $1-\iota$, at least one of the outputs of Algorithm~\ref{alg:pri_spider} is $\gamma$-SOSP, with $\gamma=\TO(\alpha)$.
\end{lemma}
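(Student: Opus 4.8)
The plan is to assemble the three results already established — the privacy-plus-accuracy guarantee of Lemma~\ref{lm:gradient_universal_error}, the utility guarantee of Proposition~\ref{prop:utility_spider}, and the sample-complexity bound of Lemma~\ref{lm:dataset_size} — and then take $\alpha$ as large as the constraint ``the algorithm does not exhaust $\calD$'' permits. Concretely, fix $\kappa=\max\{\alpha\sqrt d/\epsilon,(BGM)^{1/3}\}$ and run Algorithm~\ref{alg:pri_spider} with the batch sizes $b,b_t$ and tree noise $\sigma$ prescribed in Lemma~\ref{lm:gradient_universal_error}. That lemma gives $(\epsilon,\delta)$-DP for free and, after a union bound over the $T$ iterations, yields on an event of probability $1-T\iota$ that $\|\tnabla_t-\nabla F(x_t)\|\le\gamma$ for all $t\in[T]$ with $\gamma=\tO(\alpha)$; up to logarithmic factors $\gamma$ and $\alpha$ are of the same order, so we may freely replace $1/\gamma$ by $O(1/\alpha)$ below. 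Conditioning on this event, Proposition~\ref{prop:utility_spider} guarantees that if the run reaches step $T=\tO(BM/\gamma^2)$ without halting, then with probability $1-T\iota$ one of $x_1,\dots,x_T$ is an $\tO(\gamma)=\tO(\alpha)$-SOSP.

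What remains is to certify that the algorithm does not terminate early for lack of data, i.e.\ that $n$ is at least the number of functions consumed, which by Lemma~\ref{lm:dataset_size} is $\tO\big(\tfrac{bBM}{\kappa\gamma}+BM(\tfrac{\sqrt d}{\gamma^2\epsilon}+\tfrac{\kappa}{\gamma^3}+\tfrac1{\gamma^2})\big)$. I would bound this term by term, using $\kappa\ge(BGM)^{1/3}$ wherever $\kappa$ sits in a denominator, $\kappa\le(BGM)^{1/3}+\alpha\sqrt d/\epsilon$ in the single numerator occurrence, $1/\gamma\lesssim1/\alpha$, and $b=G\sqrt d/(\epsilon\alpha)+G^2/\alpha^2$. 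This reduces the count (up to logs) to a sum of five monomials in $\alpha$: from $\tfrac{bBM}{\kappa\gamma}$ one gets $\tfrac{(BGM)^{2/3}\sqrt d}{\epsilon\alpha^2}$ and $\tfrac{(BM)^{2/3}G^{5/3}}{\alpha^3}$; from $\tfrac{BM\kappa}{\gamma^3}$ one gets $\tfrac{BM\sqrt d}{\epsilon\alpha^2}$ and $\tfrac{(BM)^{4/3}G^{1/3}}{\alpha^3}$; the term $\tfrac{BM}{\gamma^2}$ gives $\tfrac{BM}{\alpha^2}$; and $\tfrac{BM\sqrt d}{\gamma^2\epsilon}$ coincides with $\tfrac{BM\sqrt d}{\epsilon\alpha^2}$. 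Requiring each of these five to be at most $n$ and solving for $\alpha$ produces exactly $(BGM)^{1/3}(\tfrac{\sqrt d}{n\epsilon})^{1/2}$, $\tfrac{B^{2/9}M^{2/9}G^{5/9}}{n^{1/3}}$, $\sqrt{BM}(\tfrac{\sqrt d}{n\epsilon})^{1/2}$, $\tfrac{B^{4/9}M^{4/9}G^{1/9}}{n^{1/3}}$, and $\tfrac{(MB)^{1/2}}{\sqrt n}$; taking $\alpha$ to be a constant multiple of the sum of these — which is precisely the stated bound — makes each of the finitely many terms of the function count $O(n)$, hence the total at most $n$. A closing union bound over all the high-probability events invoked, with $\iota$ rescaled by the relevant polynomial factors, gives overall success probability $1-\iota$.

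I expect the only genuine care to be in the bookkeeping around the maximum defining $\kappa$: each occurrence of $\kappa$ has to be estimated one-sidedly in a way that is valid regardless of which branch is active, and one must confirm that no such estimate produces a monomial larger than those already listed. For instance, using the alternative estimate $\kappa\ge\alpha\sqrt d/\epsilon$ inside $\tfrac{bBM}{\kappa\gamma}$ would generate the term $(BGM)^{1/3}/n^{1/3}$, which is harmless because $(BGM)^{1/3}\le\max\{B^{2/9}M^{2/9}G^{5/9},B^{4/9}M^{4/9}G^{1/9}\}$ by a comparison of exponents. Everything else — inverting each monomial of the sample-complexity bound, and checking that the $\tO$ slack of Lemma~\ref{lm:dataset_size} together with $\gamma=\tO(\alpha)$ costs only polylogarithmic factors that the final $\tO$ absorbs — is routine arithmetic.
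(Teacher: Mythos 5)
Your proposal is correct and follows essentially the same route as the paper's own proof: combine Lemma~\ref{lm:gradient_universal_error}, Proposition~\ref{prop:utility_spider}, and Lemma~\ref{lm:dataset_size}, then invert each monomial of the data-consumption bound (using $\kappa\ge(BGM)^{1/3}$ in denominators, $\kappa\lesssim \alpha\sqrt d/\epsilon+(BGM)^{1/3}$ in the numerator, and $\gamma=\tO(\alpha)$) to read off the constraints on $\alpha$. The five monomials you list and their inversions match the terms in the stated bound exactly, so this is a faithful reconstruction of the paper's argument.
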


Combining Lemma~\ref{lm:sigma_in_tree_mechanism} and Lemma~\ref{lm:find_SOSP}, we have the following main result for finding SOSP privately:

\begin{theorem}
\label{thm:private_SOSP}
Given $\delta>0,\epsilon\le O(1)$, with gradient oracles in Algorithm~\ref{alg:gradient_oracle}, setting $b=G\sqrt{d}/\epsilon\alpha+G^2/\alpha^2+\frac{M\sqrt{d}}{\rho \epsilon}+\frac{M^2}{\rho\alpha},b_t=\max\{\frac{\|g_t\|\cdot\sqrt{d}}{\alpha\epsilon},\frac{\kappa\cdot\|g_t\|}{\alpha^2}, \frac{\rho\kappa\cdot\|g_t\|}{M^2\alpha} ,1\}$, $\kappa=\max\{\frac{\alpha\sqrt{d}}{\epsilon},(BGM)^{1/3}\}$, $\Xi=\sqrt{\gamma/\rho}, \tau=\frac{\alpha^{3/2}}{M\sqrt{\rho}}, \Gamma=\Tilde{O}(M/\sqrt{\rho\gamma})$,  and $\sigma_\tree=\sigma_t=\alpha/\sqrt{d},\forall t$, Algorithm~\ref{alg:pri_spider} is $(\epsilon,\delta)$-DP, and if the dataset is i.i.d. drawn from the underlying distribution $\calP$, at least one of its outputed points is $\TO(\alpha)$-SOSP, where
\begin{align*}
    \alpha=&O\Big(((BGM)^{1/3}+\sqrt{BM})(\frac{\sqrt{d}}{n\epsilon})^{1/2} +\frac{B^{\frac{2}{9}}M^{\frac{2}{9}}G^{\frac{5}{9}}+B^{\frac{4}{9}}M^{\frac{4}{9}}G^{\frac{1}{9}}}{n^{1/3}}\Big)\\
        &+O\Big(\frac{(MB)^{1/2}}{\sqrt{n}}+\frac{\sqrt{BM^2}}{\rho\sqrt{n}}+\frac{B^{1/3}M^{4/3}}{G^{1/6}\sqrt{\rho n}}+\frac{B^{2/3}G^{1/6}\sqrt{\rho}}{M^{1/3}\sqrt{n}}+\frac{B\rho\sqrt{d}}{Mn\epsilon}\Big),
\end{align*}
\end{theorem}

\begin{remark}
If we treat the parameters $B,G,M,\rho$ as constants $O(1)$, then we get $\alpha=\TO((\frac{\sqrt{d}}{n\epsilon})^{1/2}+\frac{1}{n^{1/3}})$ as claimed before in the abstract and introduction.
\end{remark}

If we make further assumptions, like assuming the functions are defined over a constraint domain $\xset\subset\R^d$ of diameter $R$ and we allow exponential running time, we can get some other standard bounds that can be better than Theorem~\ref{thm:private_SOSP} in some regimes.
See Appendix~\ref{sec:othere_results} for more discussions.

\section{Discussion}
We combine the concepts of adaptive batch sizes and the tree mechanism to improve the previous best results for privately finding SOSP. Our approach achieves the same bound as the state-of-the-art method for finding FOSP, suggesting that privately finding an SOSP may incur no additional cost.

Several interesting questions remain. First, what is the tight bound for privately finding FOSP and SOSP? Second, can the adaptive batch size technique be applied in other settings? Could it offer additional advantages, such as reducing runtime in practice? 
Finally, while we can obtain a generalization error bound of $\sqrt{d/n}$ using concentration inequalities and the union bound, can we achieve a better generalization error bound for the non-convex optimization?

\newpage
\bibliographystyle{plainnat}	
\bibliography{iclr2025_conference}

\begin{thebibliography}{42}
\providecommand{\natexlab}[1]{#1}
\providecommand{\url}[1]{\texttt{#1}}
\expandafter\ifx\csname urlstyle\endcsname\relax
  \providecommand{\doi}[1]{doi: #1}\else
  \providecommand{\doi}{doi: \begingroup \urlstyle{rm}\Url}\fi

\bibitem[Agarwal et~al.(2017)Agarwal, Allen-Zhu, Bullins, Hazan, and Ma]{agarwal2017finding}
Naman Agarwal, Zeyuan Allen-Zhu, Brian Bullins, Elad Hazan, and Tengyu Ma.
\newblock Finding approximate local minima faster than gradient descent.
\newblock In \emph{Proceedings of the 49th Annual ACM SIGACT Symposium on Theory of Computing}, pages 1195--1199, 2017.

\bibitem[Arjevani et~al.(2023)Arjevani, Carmon, Duchi, Foster, Srebro, and Woodworth]{arjevani2023lower}
Yossi Arjevani, Yair Carmon, John~C Duchi, Dylan~J Foster, Nathan Srebro, and Blake Woodworth.
\newblock Lower bounds for non-convex stochastic optimization.
\newblock \emph{Mathematical Programming}, 199\penalty0 (1):\penalty0 165--214, 2023.

\bibitem[Arora et~al.(2023)Arora, Bassily, Gonz{\'a}lez, Guzm{\'a}n, Menart, and Ullah]{ABG+22}
Raman Arora, Raef Bassily, Tom{\'a}s Gonz{\'a}lez, Crist{\'o}bal~A Guzm{\'a}n, Michael Menart, and Enayat Ullah.
\newblock Faster rates of convergence to stationary points in differentially private optimization.
\newblock In \emph{International Conference on Machine Learning}, pages 1060--1092. PMLR, 2023.

\bibitem[Asi et~al.(2021)Asi, Feldman, Koren, and Talwar]{asi2021private}
Hilal Asi, Vitaly Feldman, Tomer Koren, and Kunal Talwar.
\newblock Private stochastic convex optimization: Optimal rates in l1 geometry.
\newblock In \emph{International Conference on Machine Learning}, pages 393--403. PMLR, 2021.

\bibitem[Bassily et~al.(2014)Bassily, Smith, and Thakurta]{BST14}
Raef Bassily, Adam Smith, and Abhradeep Thakurta.
\newblock Private empirical risk minimization: Efficient algorithms and tight error bounds.
\newblock In \emph{Proc. of the 2014 IEEE 55th Annual Symp. on Foundations of Computer Science (FOCS)}, pages 464--473, 2014.

\bibitem[Bassily et~al.(2019)Bassily, Feldman, Talwar, and Guha~Thakurta]{bassily2019private}
Raef Bassily, Vitaly Feldman, Kunal Talwar, and Abhradeep Guha~Thakurta.
\newblock Private stochastic convex optimization with optimal rates.
\newblock \emph{Advances in neural information processing systems}, 32, 2019.

\bibitem[Bassily et~al.(2021)Bassily, Guzm{\'a}n, and Nandi]{bassily2021non}
Raef Bassily, Crist{\'o}bal Guzm{\'a}n, and Anupama Nandi.
\newblock Non-euclidean differentially private stochastic convex optimization.
\newblock In \emph{Conference on Learning Theory}, pages 474--499. PMLR, 2021.

\bibitem[Carmon et~al.(2020)Carmon, Duchi, Hinder, and Sidford]{carmon2020lower}
Yair Carmon, John~C Duchi, Oliver Hinder, and Aaron Sidford.
\newblock Lower bounds for finding stationary points i.
\newblock \emph{Mathematical Programming}, 184\penalty0 (1):\penalty0 71--120, 2020.

\bibitem[Carmon et~al.(2023)Carmon, Jambulapati, Jin, Lee, Liu, Sidford, and Tian]{carmon2023resqueing}
Yair Carmon, Arun Jambulapati, Yujia Jin, Yin~Tat Lee, Daogao Liu, Aaron Sidford, and Kevin Tian.
\newblock Resqueing parallel and private stochastic convex optimization.
\newblock In \emph{2023 IEEE 64th Annual Symposium on Foundations of Computer Science (FOCS)}, pages 2031--2058. IEEE, 2023.

\bibitem[Chan et~al.(2011)Chan, Shi, and Song]{chan2011private}
T-H~Hubert Chan, Elaine Shi, and Dawn Song.
\newblock Private and continual release of statistics.
\newblock \emph{ACM Transactions on Information and System Security (TISSEC)}, 14\penalty0 (3):\penalty0 1--24, 2011.

\bibitem[Chaudhuri et~al.(2011)Chaudhuri, Monteleoni, and Sarwate]{chaudhuri2011differentially}
Kamalika Chaudhuri, Claire Monteleoni, and Anand~D Sarwate.
\newblock Differentially private empirical risk minimization.
\newblock \emph{Journal of Machine Learning Research}, 12\penalty0 (3), 2011.

\bibitem[Davis et~al.(2022)Davis, Drusvyatskiy, Lee, Padmanabhan, and Ye]{davis2022gradient}
Damek Davis, Dmitriy Drusvyatskiy, Yin~Tat Lee, Swati Padmanabhan, and Guanghao Ye.
\newblock A gradient sampling method with complexity guarantees for lipschitz functions in high and low dimensions.
\newblock \emph{Advances in neural information processing systems}, 35:\penalty0 6692--6703, 2022.

\bibitem[De et~al.(2016)De, Yadav, Jacobs, and Goldstein]{de2016big}
Soham De, Abhay Yadav, David Jacobs, and Tom Goldstein.
\newblock Big batch sgd: Automated inference using adaptive batch sizes.
\newblock \emph{arXiv preprint arXiv:1610.05792}, 2016.

\bibitem[Dwork et~al.(2006)Dwork, McSherry, Nissim, and Smith]{DMNS06}
Cynthia Dwork, Frank McSherry, Kobbi Nissim, and Adam Smith.
\newblock Calibrating noise to sensitivity in private data analysis.
\newblock In \emph{Proc. of the Third Conf. on Theory of Cryptography (TCC)}, pages 265--284, 2006.
\newblock URL \url{http://dx.doi.org/10.1007/11681878\_14}.

\bibitem[Dwork et~al.(2010)Dwork, Naor, Pitassi, and Rothblum]{dwork2010differential}
Cynthia Dwork, Moni Naor, Toniann Pitassi, and Guy~N Rothblum.
\newblock Differential privacy under continual observation.
\newblock In \emph{Proceedings of the forty-second ACM symposium on Theory of computing}, pages 715--724, 2010.

\bibitem[Fang et~al.(2018)Fang, Li, Lin, and Zhang]{fang2018spider}
Cong Fang, Chris~Junchi Li, Zhouchen Lin, and Tong Zhang.
\newblock Spider: Near-optimal non-convex optimization via stochastic path-integrated differential estimator.
\newblock \emph{Advances in neural information processing systems}, 31, 2018.

\bibitem[Feldman et~al.(2020)Feldman, Koren, and Talwar]{feldman2020private}
Vitaly Feldman, Tomer Koren, and Kunal Talwar.
\newblock Private stochastic convex optimization: optimal rates in linear time.
\newblock In \emph{Proceedings of the 52nd Annual ACM SIGACT Symposium on Theory of Computing}, pages 439--449, 2020.

\bibitem[Fichtenberger et~al.(2023)Fichtenberger, Henzinger, and Upadhyay]{fichtenberger2023constant}
Hendrik Fichtenberger, Monika Henzinger, and Jalaj Upadhyay.
\newblock Constant matters: Fine-grained error bound on differentially private continual observation.
\newblock In \emph{International Conference on Machine Learning}, pages 10072--10092. PMLR, 2023.

\bibitem[Ganesh et~al.(2023)Ganesh, Liu, Oh, and Thakurta]{ganesh2023private}
Arun Ganesh, Daogao Liu, Sewoong Oh, and Abhradeep Thakurta.
\newblock Private (stochastic) non-convex optimization revisited: Second-order stationary points and excess risks.
\newblock \emph{Advances in Neural Information Processing Systems}, 2023.

\bibitem[Gao and Wright(2023)]{GW23}
Changyu Gao and Stephen~J Wright.
\newblock Differentially private optimization for smooth nonconvex erm.
\newblock \emph{arXiv preprint arXiv:2302.04972}, 2023.

\bibitem[Ghadimi and Lan(2013)]{ghadimi2013stochastic}
Saeed Ghadimi and Guanghui Lan.
\newblock Stochastic first-and zeroth-order methods for nonconvex stochastic programming.
\newblock \emph{SIAM journal on optimization}, 23\penalty0 (4):\penalty0 2341--2368, 2013.

\bibitem[Gopi et~al.(2023)Gopi, Lee, Liu, Shen, and Tian]{gopi2023private}
Sivakanth Gopi, Yin~Tat Lee, Daogao Liu, Ruoqi Shen, and Kevin Tian.
\newblock Private convex optimization in general norms.
\newblock In \emph{Proceedings of the 2023 Annual ACM-SIAM Symposium on Discrete Algorithms (SODA)}, pages 5068--5089. SIAM, 2023.

\bibitem[Ji et~al.(2020)Ji, Wang, Weng, Zhou, Zhang, and Liang]{ji2020history}
Kaiyi Ji, Zhe Wang, Bowen Weng, Yi~Zhou, Wei Zhang, and Yingbin Liang.
\newblock History-gradient aided batch size adaptation for variance reduced algorithms.
\newblock In \emph{International Conference on Machine Learning}, pages 4762--4772. PMLR, 2020.

\bibitem[Jin et~al.(2017)Jin, Ge, Netrapalli, Kakade, and Jordan]{jin2017escape}
Chi Jin, Rong Ge, Praneeth Netrapalli, Sham~M Kakade, and Michael~I Jordan.
\newblock How to escape saddle points efficiently.
\newblock In \emph{International conference on machine learning}, pages 1724--1732. PMLR, 2017.

\bibitem[Jin et~al.(2019)Jin, Netrapalli, Ge, Kakade, and Jordan]{JNG+19}
Chi Jin, Praneeth Netrapalli, Rong Ge, Sham~M Kakade, and Michael~I Jordan.
\newblock A short note on concentration inequalities for random vectors with subgaussian norm.
\newblock \emph{arXiv preprint arXiv:1902.03736}, 2019.

\bibitem[Jordan et~al.(2023)Jordan, Kornowski, Lin, Shamir, and Zampetakis]{jordan2023deterministic}
Michael Jordan, Guy Kornowski, Tianyi Lin, Ohad Shamir, and Manolis Zampetakis.
\newblock Deterministic nonsmooth nonconvex optimization.
\newblock In \emph{The Thirty Sixth Annual Conference on Learning Theory}, pages 4570--4597. PMLR, 2023.

\bibitem[Kornowski and Shamir(2021)]{kornowski2021oracle}
Guy Kornowski and Ohad Shamir.
\newblock Oracle complexity in nonsmooth nonconvex optimization.
\newblock \emph{Advances in Neural Information Processing Systems}, 34:\penalty0 324--334, 2021.

\bibitem[Kornowski et~al.(2024)Kornowski, Liu, and Talwar]{kornowski2024improved}
Guy Kornowski, Daogao Liu, and Kunal Talwar.
\newblock Improved sample complexity for private nonsmooth nonconvex optimization.
\newblock \emph{arXiv preprint arXiv:2410.05880}, 2024.

\bibitem[Kulkarni et~al.(2021)Kulkarni, Lee, and Liu]{kulkarni2021private}
Janardhan Kulkarni, Yin~Tat Lee, and Daogao Liu.
\newblock Private non-smooth erm and sco in subquadratic steps.
\newblock \emph{Advances in Neural Information Processing Systems}, 34:\penalty0 4053--4064, 2021.

\bibitem[Lowy et~al.(2024)Lowy, Ullman, and Wright]{lowy2024make}
Andrew Lowy, Jonathan Ullman, and Stephen Wright.
\newblock How to make the gradients small privately: Improved rates for differentially private non-convex optimization.
\newblock In \emph{Forty-first International Conference on Machine Learning}, 2024.

\bibitem[Menart et~al.(2024)Menart, Ullah, Arora, Bassily, and Guzm{\'a}n]{menart2024differentially}
Michael Menart, Enayat Ullah, Raman Arora, Raef Bassily, and Crist{\'o}bal Guzm{\'a}n.
\newblock Differentially private non-convex optimization under the kl condition with optimal rates.
\newblock In \emph{International Conference on Algorithmic Learning Theory}, pages 868--906. PMLR, 2024.

\bibitem[Nesterov(1998)]{nesterov1998introductory}
Yurii Nesterov.
\newblock Introductory lectures on convex programming volume i: Basic course.
\newblock \emph{Lecture notes}, 3\penalty0 (4):\penalty0 5, 1998.

\bibitem[Nesterov and Polyak(2006)]{nesterov2006cubic}
Yurii Nesterov and Boris~T Polyak.
\newblock Cubic regularization of newton method and its global performance.
\newblock \emph{Mathematical programming}, 108\penalty0 (1):\penalty0 177--205, 2006.

\bibitem[Nguyen et~al.(2017)Nguyen, Liu, Scheinberg, and Tak{\'a}{\v{c}}]{nguyen2017sarah}
Lam~M Nguyen, Jie Liu, Katya Scheinberg, and Martin Tak{\'a}{\v{c}}.
\newblock Sarah: A novel method for machine learning problems using stochastic recursive gradient.
\newblock In \emph{International conference on machine learning}, pages 2613--2621. PMLR, 2017.

\bibitem[Tao et~al.(2025)Tao, Yu, Cheng, Dressler, and Wang]{tao2025private}
Youming Tao, Dongxiao Yu, Xiuzhen Cheng, Falko Dressler, and Di~Wang.
\newblock Private stochastic optimization for achieving second-order stationary points, 2025.
\newblock URL \url{https://openreview.net/forum?id=UVaLZMv0uk}.
\newblock OpenReview preprint, ICLR submission.

\bibitem[Tran and Cutkosky(2022)]{tran2022momentum}
Hoang Tran and Ashok Cutkosky.
\newblock Momentum aggregation for private non-convex erm.
\newblock \emph{Advances in Neural Information Processing Systems}, 35:\penalty0 10996--11008, 2022.

\bibitem[Tropp(2012)]{tropp2012user}
Joel~A Tropp.
\newblock User-friendly tail bounds for sums of random matrices.
\newblock \emph{Foundations of computational mathematics}, 12:\penalty0 389--434, 2012.

\bibitem[Wang et~al.(2019{\natexlab{a}})Wang, Chen, and Xu]{WCX19}
Di~Wang, Changyou Chen, and Jinhui Xu.
\newblock Differentially private empirical risk minimization with non-convex loss functions.
\newblock In \emph{International Conference on Machine Learning}, pages 6526--6535. PMLR, 2019{\natexlab{a}}.

\bibitem[Wang et~al.(2023)Wang, Jayaraman, Evans, and Gu]{wang2023efficient}
Lingxiao Wang, Bargav Jayaraman, David Evans, and Quanquan Gu.
\newblock Efficient privacy-preserving stochastic nonconvex optimization.
\newblock In \emph{Uncertainty in Artificial Intelligence}, pages 2203--2213. PMLR, 2023.

\bibitem[Wang et~al.(2019{\natexlab{b}})Wang, Ji, Zhou, Liang, and Tarokh]{wang2019spiderboost}
Zhe Wang, Kaiyi Ji, Yi~Zhou, Yingbin Liang, and Vahid Tarokh.
\newblock Spiderboost and momentum: Faster variance reduction algorithms.
\newblock \emph{Advances in Neural Information Processing Systems}, 32, 2019{\natexlab{b}}.

\bibitem[Zhang et~al.(2020)Zhang, Lin, Jegelka, Sra, and Jadbabaie]{zhang2020complexity}
Jingzhao Zhang, Hongzhou Lin, Stefanie Jegelka, Suvrit Sra, and Ali Jadbabaie.
\newblock Complexity of finding stationary points of nonconvex nonsmooth functions.
\newblock In \emph{International Conference on Machine Learning}, pages 11173--11182. PMLR, 2020.

\bibitem[Zhang et~al.(2024)Zhang, Tran, and Cutkosky]{zhang2023private}
Qinzi Zhang, Hoang Tran, and Ashok Cutkosky.
\newblock Private zeroth-order nonsmooth nonconvex optimization.
\newblock In \emph{The Twelfth International Conference on Learning Representations}, 2024.

\end{thebibliography}

\appendix

\section{Omitted Proof of Subsection~\ref{sec:escape_subprocedure}}
\subsection{Proof of Lemma~\ref{lm:gradient_est_error}}

\begin{proof}
By the concentration of the Gaussian, we know with probability at least $1-\iota/2$, $\|\zeta_{t}\|_2\le \gamma$ for all $t\in[\Gamma]$.
The following proof is conditional on the event that $\|\zeta_{t}\|_2\le \gamma$ for all $t$.

By the Assumption~\ref{assump:SOSP}, for each $t\in[\Gamma]$, we know
\begin{align*}
    \|g_{t-1}-\nabla F(x_{t-1})\|_2=& \|g+H(x_{t-1}-x_0)+\zeta_{t-1}-\nabla F(x_{t-1})\|_2\\
    \le&  \|g+H(x_{t-1}-x_0)-\nabla F(x_{t-1})\|_2+\|\zeta_{t-1}\|_2\\
    \le & \|g+H(x_{t-1}-x_0)-\nabla F(x_{t-1})\|_2+\gamma\\
    = & \|g-\nabla F(x_0) + H(x_{t-1}-x_0)-(\nabla F(x_{t-1})-\nabla F(x_0)\|_2+\gamma\\
    \le& 2\gamma + \|(H-\nabla^2 F(z))(x_{t-1}-x_0)\|_2,
\end{align*}
where $z$ is a point in the section between $x_{t-1}$ and $x_0$.
Note that $\|x_{t-1}-x_0\|_2<\Xi$ by the algorithm design, hence we know $\|H-\nabla^2 F(z)\|_2\le \varkappa+ \rho\Xi$.
Hence, we have
\begin{align*}
    \|g_{t-1}-\nabla F(x_{t-1})\|_2\le 2\gamma + (\varkappa+\rho\Xi)\Xi.
\end{align*}
This completes the proof.

\end{proof}

\subsection{Proof of Lemma~\ref{lm:large_dis_to_value_dec}}

\begin{proof}
    By Claim~\ref{clm:function_value_dec}, we have
    \begin{align}
    \label{eq:value_dec}
        F(x_t)-F(x_0)\le -\frac{\eta}{4}\sum_{i=1}^t\|\nabla F(x_i)\|_2^2+5\eta\sum_{i}^t\|\varrho_i\|_2^2.
    \end{align}

    Note that $x_t-x_0=\eta (\sum_{i=1}^t \nabla F(x_i)-\varrho_i)$, which means $\eta \|(\sum_{i=1}^t \nabla F(x_i)-\varrho_i)\|_2\ge \Xi$.
    By Lemma~\ref{lm:gradient_est_error} and the preconditions, we know 
    \begin{align*}
        \|\varrho_i\|\le 3\gamma.
    \end{align*}
    Hence we know 
    \begin{align*}
        \eta \|\sum_{i=1}^t\nabla F(x_i)\|\ge \Xi-3\gamma\Gamma\ge \sqrt{\gamma/2\rho}.
    \end{align*}
    Then by Equation~\ref{eq:value_dec}, we know
    \begin{align*}
        F(x_t)-F(x_0)\le& -\frac{\eta}{4\Gamma}(\sum_{i=1}^t\|\nabla F(x_i)\|)^2+45\eta\Gamma \gamma^2\\
        \le& -\frac{\eta \gamma}{8\Gamma\rho}+45\eta\Gamma\gamma^2\\
        \le & -\Phi.
    \end{align*}
\end{proof}

\subsection{Proof of Lemma~\ref{lm:thick_ness_of_stuck_region}}
\begin{proof}
For proof purposes, let $\{x_t\}_{t\in[\Gamma]}$ and $\{x_t'\}_{t\in[\Gamma]}$ be the two trajectories with $x_1=w$ and $x_1'=u$.

    It suffices to show that
    \begin{align}
    \label{eq:large_dis}
        \max\{ \|x_\Gamma-x_0\|_2,\|x_\Gamma'-x_0\|_2\}\ge \Xi.
    \end{align}

Let $z_t=x_t-x_t'$ be the difference.
Hence we have
\begin{align*}
    z_{t+1}=&z_t-\eta [g_{t-1}-g_{t-1}']\\
    =& z_t -\eta H(x_{t}-x_t')\\
    =& (I-\eta H)z_t.
\end{align*}

Recall that $z_1=\mu r \bfe_1$, $\lambda_{\min}(\nabla^2 F(x_0))\le -\sqrt{\rho\alpha}$ and $\lambda_{\min}(H)\le \sqrt{\rho\alpha}/2$.
Then we know that
\begin{align*}
    \|z_t\|\ge (1+\eta \sqrt{\rho\gamma}/2)^t\mu r,
\end{align*}
which means 
\begin{align*}
    \|z_\Gamma\|_2\ge 2\Xi
\end{align*}
by our choices of parameters.
This establishes Equation~\eqref{eq:large_dis} and completes the proof.

\end{proof}

\subsection{Proof of Proposition~\ref{prop:utility_spider}}

\begin{proof}[Proof of Proposition~\ref{prop:utility_spider}]
By Lemma~\ref{lm:function_value_decrease} and the precondition that $\|\tnabla_t-\nabla F(x_t)\|\le \gamma$, we know that, if $\|\nabla F(x_t)\|\ge 4\gamma$, then $F(x_{t+1})-F(x_t)\le -\eta \|\tnabla\|^2/16$.
Otherwise, $\|\nabla F(x_t)\|< 4\gamma$.
If $\|\nabla F(x_t)\|< 4\gamma$ but $x_t$ is a saddle point, then by Theorem~\ref{thm:escape_subp}, we know with probability at least $1-\iota$,
\begin{align*}
    F(x_{t+\Gamma})-F(x_t)\le -\Tilde{\Omega}(\gamma^{3/2}/\sqrt{\rho}),
\end{align*}
where $\Gamma=\Tilde{O}(M/\sqrt{\rho\gamma})$.
Then if none of the points in $\{x_i\}_{i\in[T]}$ is an $\TO(\gamma)$-SOSP, then we know $F(x_T)-F(x_0)< -B$, which is contradictory to Assumption~\ref{assump:SOSP}.
Hence at least one point in $\{x_i\}_{i\in[T]}$ should be an $\Tilde{O}(\gamma)$-SOSP, and hence complete the proof.

\end{proof}

\subsection{Proof of Theorem~\ref{thm:escape_subp}}
Now we complete the proof of Theorem~\ref{thm:escape_subp}.
\begin{proof}[Proof of Theorem~\ref{thm:escape_subp}]
    By Lemma~\ref{lm:large_dis_to_value_dec} and the definition of $\calX(x_0)$ (see Equation~\eqref{eq:stuckset}), we have
    \begin{align*}
        \Pr[F(y)-F(x_0)\le-\Phi]\ge \Pr\Big[x_1\notin \calX(x_0)\mid \|\zeta_t\|_2\le\gamma,\forall t\Big]+\iota/2.
    \end{align*}

It suffices to upper bound $\Pr\Big[x_1\in \calX(x_0)\mid \|\zeta_t\|_2\le\gamma,\forall t\Big]$.
Let $\mu_0=\iota^2/4\sqrt{d}$.
Recall the Gaussian we add is $\sigma_0=\frac{\gamma}{\sqrt{d\log(T/\iota)}}$.
Suppose we add two independent Gaussians $\zeta_0,\zeta_0'$ and get two independent first iterates $x_1$ and $x_1'$.
By the property of Gaussians, we have
\begin{align*}
    \Pr[|\bfe_1,\zeta_0-\zeta_0'|\le \mu \gamma]= 2\mathrm{erf}(\mu \gamma/\sigma_0)\le \iota^2/16.
\end{align*}

Then
\begin{align*}
   \Pr\big[|\bfe_1,\zeta_0-\zeta_0'|\le \mu \gamma\mid  \|\zeta_0\|\le \gamma,\|\zeta_0'\|\le\gamma\big]\le \frac{\iota^2/16}{1-\iota} \le \iota^2/4.
\end{align*}

By Lemma~\ref{lm:thick_ness_of_stuck_region} and independence between $x_1,x_1'$, we have
\begin{align*}
    \Pr\Big[x_1\in \calX(x_0)\mid \|\zeta_t\|_2\le\gamma,\forall t\Big]\le & \sqrt{\Pr[x_1,x_1'\in\calX(x_0)\mid \|\zeta_0\|\le\gamma,\|\zeta_0'\|\le\gamma]}\\
    \le& \sqrt{\Pr\big[|\bfe_1,\zeta_0-\zeta_0'|\le \mu \gamma\mid  \|\zeta_0\|\le \gamma,\|\zeta_0'\|\le\gamma\big]}\\
    \le& \iota/2.
\end{align*}
Hence, we show that
\begin{align*}
    \Pr[F(y)-F(x_0)\le-\Phi]\ge 1-\iota.
\end{align*}
\end{proof}

\section{Proof of Subsection~\ref{sec:main_alg}}

\subsection{Proof of Lemma~\ref{lem:oracle_bounded_error}}

\begin{proof}
We only prove the first two items, as (3) and (4) follow from similar arguments.
For each data $z\sim\calP$, we know 
\begin{align*}
    \E \nabla f(x_t;z)-\nabla F(x_t)=0,\quad
    \|\nabla f(x_t;z)-\nabla F(x_t)\|\le 2G.
\end{align*}
Then the conclusion (1) follows from Lemma~\ref{lem:concentration_nSG}.

Similarly, for each data $z\sim\calP$, we know
\begin{align*}
    \E (\nabla f(x_t;z)-\nabla f(x_{t-1};z))-(\nabla F(x_t;z)-\nabla F(x_{t-1};z))=0,\\
    \|(\nabla f(x_t;z)-\nabla f(x_{t-1};z))-(\nabla F(x_t;z)-\nabla F(x_{t-1};z))\|\le 2M\|x_{t}-x_{t-1}\|.
\end{align*}
The statement (2) also follows from Lemma~\ref{lem:concentration_nSG}.
\end{proof}

\subsection{Proof of Lemma~\ref{lm:oracle_error_t}}
\begin{proof}
We consider the case when $t=\tau_t$ first, i.e., we query $\oracle_1$ to get $\nabla_t$.
Then Equation~\eqref{eq:graident_error} follows from Lemma~\ref{lem:oracle_bounded_error}.

When $t>\tau_t$, then for each $i$ such that $\tau_{t}<i\le t$, we know conditional on $\nabla_{i-1}$, we have 
\begin{align*}
    \E[\Delta_i\mid \nabla_{i-1}]=\nabla F(x_i)-\nabla F(x_{i-1}).
\end{align*}
That is $\Delta_i-(\nabla F(x_i)-\nabla F(x_{i-1}))$ is zero-mean and $\nSG(M\|x_{i}-x_{i-1}\|\sqrt{\log(dT/\iota)}/\sqrt{b_i})$ by applying Lemma~\ref{lem:oracle_bounded_error}.
Then Equation~\eqref{eq:graident_error} follows from applying Lemma~\ref{lem:concentration_nSG}.

The case for Hessian estimation involves some truncation.
By Lemma~\ref{lem:oracle_bounded_error}, we know with probability at least $1-T\iota$, we have 
\begin{align*}
    \Pr[\|H_{\tau_t}-\nabla^2 F(x_{\tau_t})\|_2\ge \zeta_3]\le \iota/T,
\end{align*}
with $\zeta_3=O(M\sqrt{\log(Td/\iota)/b})$.
The similar concentration holds for $\Delta_t^H$.
We truncate the distribution of $H_{\tau_t}$ around $\nabla^2 F(x_{\tau_t})$, that is $\Bar{H}_{\tau_t}=H_{\tau_t}\cdot \mathbf{1}\Big(\|H_{\tau_t}-\nabla^2 F(x_{\tau_t})\|\le \zeta_3\Big)$.
It is straightforward to see that 
\begin{align*}
    \|\E \Bar{H}_{\tau_t}-\nabla^2 F(x_{\tau_t})\|\le \int_{\zeta_3}^{\infty} \Pr[\|H_{\tau_t}-\nabla^2 F(x_{\tau_t})\|\ge \zeta] \mathrm{d} \zeta \le \zeta_3/T.
\end{align*}

Truncate $\Delta_t^H$ in a similar way.
Then by Lemma~\ref{lem:oracle_bounded_error} we can show that 
\begin{align}
\label{eq:azuma_after_truc}
    \Pr[\|\Bar{H}_{\tau_t}+\sum_{i=\tau_t+1}^t \bar{\Delta}_i^H- \E[\Bar{H}_{\tau_t}+\sum_{i=\tau_t+1}^t \Bar{\Delta}_i^H]\|^2\ge \frac{C}{2}(\frac{M^2}{b}+\sum_{i=\tau_t+1}^t\rho^2\|x_i-x_{i-1}\|^2/b_i)\log^2(Td/\iota)]\le \iota.
  \end{align} 
  Note that
  \begin{align}
  \label{eq:small_exp_shift}
    \|\E[\Bar{H}_{\tau_t}+\sum_{i=\tau_t+1}^t \Bar{\Delta}_i^H]-\nabla^2F(x_t)\|=&\|\E[\Bar{H}_{\tau_t}+\sum_{i=\tau_t+1}^t \Bar{\Delta}_i^H]-\E[H_{\tau_t}+\sum_{i=\tau_t+1}^t \Delta_i^H]\|\\
    \le& \frac{C\log(Td/\iota)}{2T}(\frac{M}{\sqrt{b}}+\sum_{i=\tau_t+1}^t\rho\|x_i-x_{i-1}\|/\sqrt{b_i})\notag.
\end{align}

By union bound, we have
\begin{align}
\label{eq:whp_no_truncation}
    \Pr[\|\Bar{H}_{\tau_t}+\sum_{i=\tau_t+1}^t \bar{\Delta}_i^H\neq H_t]\le T\iota/2.
\end{align}
Equations~\eqref{eq:azuma_after_truc},\eqref{eq:small_exp_shift} and \eqref{eq:whp_no_truncation} complete the proof.
\end{proof}

\subsection{Proof of Lemma~\ref{lm:sigma_in_tree_mechanism}}
\begin{proof}
We use the tree mechanism to privatize the gradients if no potential saddle point is met, and we use the Gaussian mechanism during escaping from the saddle point.

We first show the indistinguishability of the gradients.
It suffices to consider the sensitivity of the gradient oracles.

Consider the sensitivity of $\oracle_1$ first.
Let $\oracle(x_t)'$ denote the output with the neighboring dataset.
Then it is obvious that
\begin{align*}
    \|\oracle_1(x_t)-\oracle_1(x_t)'\|\le \frac{G}{b}.
\end{align*}
As for the sensitivity of $\oracle_2$, we have
\begin{align*}
    \|\oracle_2(x_t,x_{t-1})-\oracle_2(x_t,x_{t-1})'\|\le \frac{M\|x_t-x_{t-1}\|}{b_t}= \frac{\|\tnabla_t\|}{b_t}.
\end{align*}
The privacy guarantee of gradients follows from the tree mechanism (Theorem~\ref{thm:tree_mech}), which means $\{\tnabla_t\}_{t}\approx_{\epsilon/2,\delta/2}\{\tnabla_t'\}_{t}$.

As for the Gaussian mechanism, note that for neighboring datasets with Hessian estimates $H_t$ and $H_t'$ respectively, we know that 
\begin{align*}
    \|H_t-H_t'\|_2\le \frac{M}{b}+\max_{t}\frac{\rho\|x_t-x_{t-1}\|}{b_t}.
\end{align*}
Note that we force a fresh estimate from $\oracle_3$ after escaping the saddle points for $\tau$ times, and in each time, we ensure that $\|x_t-x_0\|\le \Xi$ and $H_t$ are used at most $\Gamma$ steps, which means the difference is
\begin{align*}
    \|(H_t-H_t')(x_t-x_0)\|_2\le \|H_t-H_t'\|\cdot\|x_t-x_0\|\le(\frac{M}{b}+\max_{t}\frac{\rho\|x_t-x_{t-1}\|}{b_t})\cdot\Xi.
\end{align*}
Then the property of Gaussian Mechanism and composition finishes the privacy guarantee proof.
\end{proof}

\subsection{Proof of Lemma~\ref{lm:gradient_universal_error}}
\begin{proof}
By our setting of parameters, we know $\Xi\sqrt{\Gamma\tau}\le \alpha /\rho$  and hence 
\begin{align*}
    (\frac{G}{b}+\max_t\frac{\|g_t\|}{b_t})\le 2\epsilon\alpha/\sqrt{d},\\
    (\frac{M}{b}+\max_{t}\frac{\rho\|x_t-x_{t-1}\|}{b_t})\cdot(2\Xi\sqrt{\Gamma\tau})\le 2\epsilon \alpha/\sqrt{d},
\end{align*}
Then our choice of $\sigma_\tree$ and $\sigma_t$ ensures the privacy guarantee by Lemma~\ref{lm:sigma_in_tree_mechanism}.

For any $t\in[T]$, if it is in the process of escaping from the saddle point, and $g_t=g+H(x_t-\xanc)+\xi_t$, then Lemma~\ref{lm:gradient_est_error} and our parameter settings ensure the accuracy on $g_t$.
Consider the other case when $g_t=\tnabla_t$, we have
\begin{align*}
    \|\tnabla_t - \nabla F(x_t)\| \leq \underbrace{\|\tnabla_t - \nabla_t\|}_{(1)} + \underbrace{\|\nabla_t - \nabla F(x_t)\|}_{(2)}.
\end{align*}
By Theorem~\ref{thm:tree_mech}, we know 
\begin{align*}
    (1)\le \max_t\|\TREE(t)\|\le \sigma\sqrt{d\log(T)}\le \sigma\sqrt{d\log n}\lesssim \alpha\sqrt{\log n}\le \Tilde{O}(\gamma).
\end{align*}

By Lemma~\ref{lm:oracle_error_t} and our parameter settings, we have
\begin{align*}
    \|\nabla_t-\nabla F(x_t)\|^2\lesssim & (\alpha^2+ \sum_{i=\tau_t+1}^t\|g_t\|^2/b_t)\log(nd/\iota)\\
    \lesssim & (\alpha^2+ \sum_{i=\tau_t+1}^t\|g_t\|\cdot\min\{\frac{\alpha\epsilon}{\sqrt{d}},\alpha^2/\kappa\})\log(nd/\iota)\\
    \le& (\alpha^2+\kappa\cdot\min\{\frac{\alpha\epsilon}{\sqrt{d}},\alpha^2/\kappa\})\log(nd/\iota)\\
    \lesssim & \alpha^2\log(nd/\iota).
\end{align*}
Hence we conclude that $(2)\lesssim \alpha\sqrt{\log(nd/\iota)}$.

Similarly, by Lemma~\ref{lm:oracle_error_t}, we can conclude that
\begin{align*}
    \|H_t-\nabla^2 F(x_t)\|^2 \lesssim & (\rho\alpha+\sum_{i=\tau_t+1}\rho^2\|g_t\|^2/M^2b_t)\log^2(nd/\iota) \\
    \lesssim & 
(\rho\alpha+\sum_{i=\tau_t+1}^t\rho^2\|g_t\|\cdot (\alpha/\rho\kappa))\log^2(nd/\iota)\\
\lesssim & \rho\alpha \log^2(nd/\iota),
\end{align*}
which completes the proof.
\end{proof}

\subsection{Proof of Lemma~\ref{lm:dataset_size}}
\begin{proof}
By Proposition~\ref{prop:utility_spider}, setting $T=\Tilde{O}(B/\eta\gamma^2)$ suffices to find an $\Tilde{O}(\gamma)$-SOSP.
Let $\{x_1,\cdots,x_t\}$ be the outputs of the algorithms, where $t\le T$ denotes the step we halt the algorithm.
We first show
\begin{align}
\label{eq:bound_sum_gradient_norm}
\sum_{i=1}^{t}\|g_i\|_2\lesssim \Tilde{O} (BM/\gamma).
\end{align}

Denote the set $S:=\{i\in[t]:\|g_i\|\le \gamma\}$.
As $|S|\le T=\TO(B/\eta\gamma^2)$, we know $\sum_{i\in S}\|g_i\|\le \TO(B/\eta\gamma)$.

Now consider the set $S^c:=[t]\setminus S$ denoting the index of steps when the norm of the gradient estimator is large.
It suffices to bound $\sum_{i\in S^c}\|g_i\|_2$.

By Lemma~\ref{lm:function_value_decrease}, we know when $\|g_i\|\ge \gamma$, $F(x_{i+1})-F(x_i)\le -\eta \|g_i\|^2/16$, and when $\|g_i\|\le \gamma, F(x_{i+1})\le F(x_i)+\eta\gamma^2$.
Given the bound on the function values, we know
\begin{align*}
    \sum_{i\in S^c}\|g_i\|^2\le \TO(B/\eta).
\end{align*}
Hence 
\begin{align*}
    \sum_{i\in S^c}\|g_i\|\le \frac{\sum_{i\in S^c}\|g_i\|^2}{\gamma}\le\TO(\frac{B}{\eta\gamma}).
\end{align*}
This completes the proof of Equation~\eqref{eq:bound_sum_gradient_norm}.
Moreover, we know the total time that we will escape from the saddle point is at most $O(B/\Phi)$.
Hence,
The total number of functions we used for $\oracle_1$ and $\oracle_3$, is upper bounded by \begin{align*}
    b\cdot (\frac{\sum_{i\in [t]}\|g_i\|}{\kappa}+B/\Phi\tau)=\TO(bBM/\kappa\gamma).
\end{align*}
The total number of functions we used for $\oracle_2$ is upper bounded as follows:
\begin{align*}
    \sum_{i\in[t]}b_t\lesssim (\frac{\sqrt{d}}{\alpha\epsilon}+\frac{\kappa}{\alpha^2}+\frac{\rho\kappa}{M^2\alpha})\sum_{i\in[t]}\|g_i\|+T\le BM\cdot\TO(\frac{\sqrt{d}}{\gamma^2\epsilon}+\frac{\kappa}{\gamma^3}+\frac{\rho\kappa}{M^2\gamma^2}+\frac{1}{\gamma^2}).
\end{align*}
This completes the proof.
\end{proof}

\subsection{Proof of Lemma~\ref{lm:find_SOSP}}
\begin{proof}
By Lemma~\ref{lm:dataset_size} and our parameter settings, we need
\begin{align*}
    n\ge \Tilde{\Omega}\big(\frac{bBM}{\kappa\gamma}+BM(\frac{\sqrt{d}}{\gamma^2 \epsilon}+\frac{\kappa}{\gamma^3}+\frac{\rho\kappa}{M^2\gamma^2} +\frac{1}{\gamma^2})\big).
\end{align*}

\noindent First,
\begin{align*}
    n\ge \TOmega(\frac{bBM}{\kappa \gamma})=\Theta(\frac{BGM\sqrt{d}}{\kappa\epsilon\gamma^2}+\frac{G^2BM}{\kappa\gamma^3}+\frac{BM^2\sqrt{d}}{\kappa\rho \gamma\epsilon}+\frac{BM^3}{\kappa\rho\gamma^2})\\
    \Leftarrow \gamma\ge \TO(\frac{(BGM)^{1/3}d^{1/4}}{\sqrt{n\epsilon}}+\frac{B^{\frac{2}{9}}M^{\frac{2}{9}}G^{\frac{5}{9}}}{n^{1/3}}+\frac{\sqrt{BM^2}}{\sqrt{\rho n}}+ \frac{B^{1/3}M^{4/3}}{G^{1/6}\sqrt{\rho n}}).
\end{align*}
Secondly,
\begin{align*}
    n\ge \TOmega(BM\sqrt{d}/\gamma^2\epsilon)\Leftarrow \gamma\ge \TO(\sqrt{BM}(\frac{\sqrt{d}}{n\epsilon})^{1/2}).
\end{align*}
Thirdly,
\begin{align*}
    n\ge\TOmega(BM\kappa/\gamma^3)\Leftrightarrow n\ge\TOmega(BM(\frac{\sqrt{d}}{\gamma^2\epsilon})+B^{4/3}M^{4/3}G^{1/3}/\gamma^3)\\
    \Leftarrow \gamma\ge \TO(\sqrt{BM}(\frac{\sqrt{d}}{n\epsilon})^{1/2}+\frac{B^{\frac{4}{9}}M^{\frac{4}{9}}G^{\frac{1}{9}}}{n^{1/3}}).
\end{align*}
Fourthly,
\begin{align*}
    n\ge \TOmega(\frac{B\rho\kappa}{M\gamma^2})\ge \TOmega(\frac{B^{4/3}G^{1/3}\rho}{M^{2/3}\gamma^2}+\frac{B\rho\sqrt{d}}{M\gamma\epsilon})\\
    \Leftarrow \gamma \ge \TO(\frac{B^{2/3}G^{1/6}\sqrt{\rho}}{M^{1/3}\sqrt{n}}+\frac{B\rho\sqrt{d}}{Mn\epsilon}).
\end{align*}

Finally,
\begin{align*}
    n\ge \TOmega(MB/\gamma^2) \Leftarrow \gamma \ge \TO(\frac{(MB)^{1/2}}{\sqrt{n}}).
\end{align*}

Combining these together, we get the claimed statement.




\end{proof}

\section{Other results}
\label{sec:othere_results}
The first result is combining the current result in finding the SOSP of the empirical function $\FD(x):=\frac{1}{n}\sum_{\zeta\in\calD}f(x;\zeta)$, and then apply the generalization error bound as follows:
\begin{theorem}
\label{thm:generalization_bound}
    Suppose $\calD$ is i.i.d. drawn from the underlying distribution $\calP$ and under Assumption~\ref{assump:SOSP}.
    Additionally assume $f(;\zeta):\xset\to\R$ for some constrained domain $\xset\subset\R^d$ of diameter $R$.
    Then we know for any point $x\in\xset$, with probability at least $1-\iota$, we have
    \begin{align*}
        \|\nabla \FP(x)-\nabla\FD(x)\|\le \Tilde{O}(\sqrt{d/n}), \|\nabla^2 \FP(x)-\nabla^2 \FD(x)\|\le  \Tilde{O}(\sqrt{d/n}).
    \end{align*}
\end{theorem}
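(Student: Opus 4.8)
The plan is to prove Theorem~\ref{thm:generalization_bound} via a standard covering-number-plus-union-bound argument, controlling both $\nabla\FD - \nabla\FP$ and $\nabla^2\FD - \nabla^2\FP$ uniformly over the constrained domain $\xset$ of diameter $R$. First I would fix a single point $x\in\xset$ and apply a concentration inequality to the i.i.d.\ average. For the gradient, note that $\nabla\FD(x) = \frac1n\sum_{i\in[n]}\nabla f(x;z_i)$ with $\E[\nabla f(x;z_i)] = \nabla\FP(x)$, and each summand satisfies $\|\nabla f(x;z_i) - \nabla\FP(x)\|\le 2G$ almost surely by $G$-Lipschitzness. So Lemma~\ref{lem:concentration_nSG} (Hoeffding-type inequality for norm-subGaussian vectors) gives $\|\nabla\FD(x) - \nabla\FP(x)\|\le \tilde O(G\sqrt{\log(d/\iota')/n})$ with probability $1-\iota'$. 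For the Hessian, each $\nabla^2 f(x;z_i)$ is a $d\times d$ random matrix with $\|\nabla^2 f(x;z_i)\|\le M$ almost surely (from $M$-smoothness) and mean $\nabla^2\FP(x)$, so the Matrix Bernstein inequality (Theorem~\ref{lm:matrix_berstein_ineq}), applied to the centered matrices $\frac1n(\nabla^2 f(x;z_i) - \nabla^2\FP(x))$, yields $\|\nabla^2\FD(x) - \nabla^2\FP(x)\|\le \tilde O(M\sqrt{\log(d/\iota')/n})$ with probability $1-\iota'$, in the regime $n \gtrsim \log(d/\iota')$ where the variance term dominates.

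Next I would upgrade the pointwise bounds to uniform bounds over $\xset$ via a net argument. Let $\calN_\epsilon$ be an $\epsilon$-net of $\xset$ with $|\calN_\epsilon|\le (3R/\epsilon)^d$. Apply the pointwise bounds with failure probability $\iota' = \iota/(2|\calN_\epsilon|)$ and union bound over the net, so that with probability $1-\iota$ both bounds hold simultaneously at every net point, with the $\log(d/\iota')$ factor becoming $\log(d|\calN_\epsilon|/\iota) = \tilde O(d\log(R/\epsilon) + \log(d/\iota))$. Then for an arbitrary $x\in\xset$, pick the nearest net point $x'$ with $\|x-x'\|\le\epsilon$ and interpolate: $\|\nabla\FD(x) - \nabla\FP(x)\|\le \|\nabla\FD(x') - \nabla\FP(x')\| + \|\nabla\FD(x)-\nabla\FD(x')\| + \|\nabla\FP(x)-\nabla\FP(x')\|$, where the last two terms are each at most $M\epsilon$ by $M$-smoothness of every $f(\cdot;z)$ (hence of $\FD$ and $\FP$). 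Similarly for the Hessian, using $\rho$-Hessian-Lipschitzness to get $\|\nabla^2\FD(x)-\nabla^2\FD(x')\|\le\rho\epsilon$ and likewise for $\FP$. Choosing $\epsilon = 1/n$ (or any inverse-polynomial in $n$) makes these discretization errors lower-order, and the net contributes only an extra $\sqrt{d}$ factor inside the $\tilde O$, giving the claimed $\|\nabla\FP(x) - \nabla\FD(x)\|\le\tilde O(\sqrt{d/n})$ and $\|\nabla^2\FP(x) - \nabla^2\FD(x)\|\le\tilde O(\sqrt{d/n})$ uniformly, after absorbing $G,M,\rho,R$ into the $\tilde O$.

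I expect the main obstacle to be somewhat cosmetic rather than deep: it is getting the dimension dependence to come out as exactly $\sqrt{d/n}$ rather than $d/\sqrt{n}$. The $\log|\calN_\epsilon| \approx d\log(Rn)$ term inside the concentration bound would naively give $\tilde O(\sqrt{d/n})$ only because it appears under a square root — so one must be careful that the union bound cost $\log|\calN_\epsilon|$ enters additively inside the $\sqrt{\cdot}$ (as it does in both Lemma~\ref{lem:concentration_nSG} and Matrix Bernstein), so that $\sqrt{\log|\calN_\epsilon|/n} = \sqrt{d\log(Rn)/n} = \tilde O(\sqrt{d/n})$. The one genuine subtlety is that Matrix Bernstein has a mixed $\sigma^2 + Lt/3$ denominator, so one should verify that in the relevant regime ($t = \Theta(\sqrt{\sigma^2\log(d/\iota)})$ with $\sigma^2 = O(M^2/n)$, $L = O(M/n)$) the sub-Gaussian term dominates the sub-exponential term, which holds precisely when $n\gtrsim\log(d/\iota)$; outside that regime the bound is vacuous anyway since the right-hand side exceeds $2M$. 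All remaining steps are routine.
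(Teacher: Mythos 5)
Your proposal is correct and follows essentially the same approach as the paper's proof: an $\epsilon$-net (the paper uses a maximal packing of radius $r$) over $\xset$, pointwise concentration via the norm-subGaussian Hoeffding inequality (Lemma~\ref{lem:concentration_nSG}) for the gradient and Matrix Bernstein (Theorem~\ref{lm:matrix_berstein_ineq}) for the Hessian, a union bound over the $O((R/\epsilon)^d)$ net points, and triangle-inequality interpolation to arbitrary $x\in\xset$ using $M$-smoothness and $\rho$-Hessian-Lipschitzness. The only differences are cosmetic (you pick $\epsilon=1/n$ versus the paper's $r\lesssim\sqrt{d/n}$, and you flag the Matrix Bernstein sub-exponential-vs-sub-Gaussian regime explicitly, which the paper leaves implicit), and both lead to the same $\tilde O(\sqrt{d/n})$ bound.
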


\begin{proof}
    We construct a maximal packing $\yset$ of $O((R/r)^d)$ points for $\xset$, such that for any $x\in\xset$, there exists a point $y\in \yset$ such that $\|x-y\|\le r$. 

    By Union bound, the Hoeffding inequality for norm-subGaussian (Lemma~\ref{lem:concentration_nSG}  and the Matrix Bernstein Inequality(Theorem~\ref{lm:matrix_berstein_ineq}), we know with probability at least $1-\tau$, for all point $y\in\yset$, we have
    \begin{align}
    \label{eq:concen_packing}
        \|\nabla \FP(y)-\nabla F_\calD(y)\|\le \Tilde{O}(L\sqrt{d\log(R/r)/n}), \|\nabla^2 \FP(y)-\nabla^2 F_\calD(y)\|\le \Tilde{O}(M\sqrt{d\log(R/r)/n}).
    \end{align}

    Conditional on the above event Equation~\eqref{eq:concen_packing}.
    Choosing $r\le \min\{1,M/\rho\}\sqrt{d/n}$, then by the assumptions on Lipschitz and smoothness, we have for any $x\in\xset$, there exists $y\in\yset$ such that $\|x-y\|\le r$, and
    \begin{align*}
        \|\nabla \FP(x)-\nabla\FD(x)\|\le & \|\nabla \FP(x)-\nabla \FP(y)\|+\|\nabla \FP(y)-\nabla \FD(y)\|+\|\nabla \FD(y)-\nabla \FD(x)\|\\
        \le & \Tilde{O}(L\sqrt{d/n}).
    \end{align*}
    Similarly, we can show
    \begin{align*}
        \|\nabla^2 \FP(x)-\nabla^2\FD(x)\|\le \Tilde{O}((M+\rho r)\sqrt{d/n})= \Tilde{O}(M\sqrt{d/n}). 
    \end{align*}
\end{proof}

The current SOTA of finding SOSP privately of $\FD$ is from \cite{ganesh2023private}, where they can find an $\TO((\sqrt{d}/n\epsilon)^{2/3})$-SOSP.
Combining the SOTA and Theorem~\ref{thm:generalization_bound}, we can find the $\alpha$-SOSP of $\FP$ privately with
\begin{align*}
    \alpha=\Tilde{O}(\frac{\sqrt{d}}{n}+(\frac{\sqrt{d}}{n\epsilon})^{2/3}).
\end{align*}

If we allow exponential running time, as \cite{lowy2024make} suggests, we can find an initial point $x_0$ privately to minimize the empirical function and then use $x_0$ as a warm start to improve the final bound further.
\end{document}